\def\BibTeX{{\rm B\kern-.05em{\sc i\kern-.025em b}\kern-.08em
    T\kern-.1667em\lower.7ex\hbox{E}\kern-.125emX}}
\newtheorem{lemma}{Lemma}
\newtheorem{theorem}{Theorem}
\newcommand{\Real}{\mathbb{R}}
\newcommand{\C}{\mathcal{C}}
\newcommand{\U}{\mathbf{U}}
\newcommand{\V}{\mathbf{V}}
\newcommand{\x}{\mathbf{x}}
\newcommand{\vv}{\mathbf{v}}
\newtheorem{definition}{Definition}
\begin{document}
%
\title{Accelerated Fuzzy C-Means Clustering Based on New Affinity Filtering and Membership Scaling}
%
%
%
%

\author{Dong~Li,
        Shuisheng~Zhou,
        and~Witold~Pedrycz,~\IEEEmembership{Life~Fellow,~IEEE}.
\IEEEcompsocitemizethanks{\IEEEcompsocthanksitem D. Li, S. Zhou are with School of Mathematics and Statistics, Xidian University, Xi'an 710071, China  (e-mail: lidong$\_$xidian@foxmail.com; sszhou@mail.xidian.edu.cn).\protect

\IEEEcompsocthanksitem W. Pedrycz is with the Department of Electrical and Computer Engineering,
University of Alberta, Edmonton T6R 2V4 AB, Canada, and with the Systems Research Institute, Polish Academy of Sciences, 00-901 Warsaw, Poland, and also with the Department of Electrical and Computer Engineering, Faculty of Engineering, King Abdulaziz University, Jeddah 21589, Saudi Arabia (e-mail: wpedrycz@ualberta.ca).}
\thanks{Manuscript received xxxx, 2022; revised xxxx, 2022. This work was supported by the National Natural Science Foundation of China under Grants No. 61772020. \emph{(Corresponding author: Shuisheng Zhou.)}}}

%
%

\markboth{IEEE TRANSACTIONS ON KNOWLEDGE AND DATA ENGINEERING,~Vol.~xx, No.~xx, June~2022}%
{Shell \MakeLowercase{\textit{et al.}}: Fuzzy C-Means Clustering Based on New Affinity Filtering and Membership Scaling}
%



\IEEEtitleabstractindextext{
\begin{abstract}
Fuzzy C-Means (FCM) is a widely used clustering method. However, FCM and its many accelerated variants have low efficiency in the mid-to-late stage of the clustering process. In this stage, all samples are involved in the update of their non-affinity centers, and the fuzzy membership grades of the most of samples, whose assignment is unchanged, are still updated by calculating the samples-centers distances. All those lead to the algorithms converging slowly. In this paper, a new affinity filtering technique is developed to recognize a complete set of the non-affinity centers for each sample with low computations. Then, a new membership scaling technique is suggested to set the membership grades between each sample and its non-affinity centers to 0 and maintain the fuzzy membership grades for others. By integrating those two techniques, FCM based on new affinity filtering and membership scaling (AMFCM) is proposed to accelerate the whole convergence process of FCM. Many experimental results performed on synthetic and real-world data sets have shown the feasibility and efficiency of the proposed algorithm. Compared with the state-of-the-art algorithms, AMFCM is significantly faster and more effective. For example, AMFCM reduces the number of the iteration of FCM by 80$\%$ on average.
\end{abstract}

\begin{IEEEkeywords}
Fuzzy C-Means, affinity filtering, triangle inequality, non-affinity center, non-affinity sample,  membership scaling.
\end{IEEEkeywords}}

\maketitle

\IEEEdisplaynontitleabstractindextext

%
\IEEEpeerreviewmaketitle

\IEEEraisesectionheading{\section{Introduction}\label{sec:introduction}}
\IEEEPARstart{C}{lustering} analysis is one of the important topics in machine learning \cite{jordan2015machine}, which has been widely applied in many fields, including data mining \cite{doring2006data}, pattern recognition \cite{bezdek2013pattern}, image processing \cite{Rezaee2000multiresolution}, etc. The clustering algorithm, which is an unsupervised learning approach, aims to divide the data sets into multiple clusters by similarity measure, among which the data points in the same cluster are similar.

In general, the clustering methods are divided into the hard and soft clustering schemes \cite{Amit2017A,jain2010data}. The representative clustering algorithms are C-Means \cite{Lloyd1982Least} and Fuzzy C-Means (FCM) \cite{bezdek1984fcm}. The hard clustering scheme, in which a sample only belongs to a single cluster, assigns the membership grades between the samples and the clusters as 0 or 1. The hard clustering scheme is very simple and efficient. Inevitably, the hard clustering scheme lacks other distance information except for the closest distance information in the update of the cluster centers, which makes the algorithm more likely to fall into bad local minimum. The soft clustering scheme, in which a sample does not exclusively belong to a single cluster, allows the membership grades to vary between 0 and 1. The soft clustering scheme has better clustering quality because of its flexibility and robustness \cite{zbian}.

The C-Means algorithm (Lloyd algorithm) \cite{Lloyd1982Least} is the most representative method in the hard clustering scheme. However, the computational complexity of all samples-centers distances is very high in C-Means. Thus, many improved  methods have been proposed. Both of these algorithms \cite{elkan2003using, ding2015yinyang} were proposed to speed up C-Means by applying a triangle inequality, which are effectively to avoid unnecessary distance calculations, and achieve higher efficiency. Another trick to deal with this challenge is region division of clusters in clustering. Some related research has been done \cite{Lingras2004Interval, ZHANG2019three, Multi2022, Effective2022}. In recent, ball C-Means \cite{ball2020kmeans} has been proposed to focus on the efficiency of C-Means by reducing the samples-centers distance computations. Significantly, the concept of the neighbor clusters and the partition of cluster are designed to attain the same performance in less time by the multiple novel schemes. 

As one of the most typical soft clustering methods, Fuzzy C-Means (FCM) \cite{bezdek1984fcm} is to divide $n$ samples into $c$ clusters by membership grade matrix $\U$, in which $u_{ij}$ represents the grade $j$th sample belongs to $i$th cluster. FCM is successful in finding and describing overlapped clusters that are ubiquitous in the complex real-world data (see \cite{doring2006data, Rezaee2000multiresolution, Coletta2012} and the references therein). However, all samples-centers distance computations also leads to high computing cost. Meanwhile, all samples are involved in the update of all centers  by the memberships, which leads to  low efficiency of FCM  in the clustering process (see \cite{xu2019robust, Zhou2020A})


In theory, the convergence rate theorem for FCM \cite{hathaway1988recent} is proved, which is that FCM converges linearly to the local minima. Meanwhile, based on the analysis of C-Means\cite{kieffer1982exponential, du1999centroidal, kanungo2000analysis}, it can be found that when C-Means is close to the local minima, the convergence rate of C-Mean drops from an exponential rate to a linear rate. Since both C-Means and FCM are alternating optimization algorithm (AO), Therefore, likewise, the convergence rate of FCM also drops, when FCM is close to a local minima. Meanwhile, the convergence rate of FCM is slower than that of C-Means in the clustering process.


Many researchers have managed to tackle this issue  based on the new update of the centers. Mitra \emph{et al.} \cite{Mitra2006Rough} have designed a Rough-Fuzzy C-Means (RFCM) clustering algorithm, which absorbs the advantages of fuzzy set and rough set and enhances the robustness and efficiency of the fuzzy clustering. Roy and Maji \cite{Roy2020Medical} proposed a spatially constrained Rough-Fuzzy C-Means (sRFCM), which wisely applies the advantages of rough-fuzzy clustering and local neighborhood information together. Furthermore, each cluster was divided into the prossibilistic core region and probabilistic boundary region by sRFCM, which improves the performance of the algorithm. Shadowed sets in the characterization of rough-fuzzy clustering (SRFCM) \cite{zhou2011Shadowed} was introduced to improve the clustering quality and efficiency by optimizing the threshold parameters based on the concept of shadowed set that affect the lower bound and boundary region of each cluster automatically. Similar research can be seen  {\cite{Shadowed2022}, \cite{Particle2022}, \cite{Criterion2022}, \cite{Hybrid2022}, \cite{M3W2022}, \cite{Zhou2018Rough} and the references therein. Unfortunately, unreasonable partition thresholds will result in undesired clustering results. Therefore, the partition parameters need to be optimized per iteration. Inevitably, the computational cost of the selection of the parameters is very high for the region partition.



To solve it, many research improve the performance of FCM by constraining the update of the memberships so that the centers can be updated to their target position more efficiently \cite{Zhao2021, Nie2022, Scalable2022, Novel2022Gu, Semisupervised2022Wang}. Recently, membership scaling Fuzzy C-Means clustering algorithm (MSFCM) \cite{Zhou2020A} has been presented to accelerate the convergence of FCM and maintain high clustering quality, where the in-cluster and out-of-cluster samples are identified by a triangle inequality. Then, the membership grades are scaled to boost the effect of the in-cluster samples and weaken the effect of the out-of-cluster samples in the clustering process.

Although the above-mentioned FCM variants usually improve the efficiency and effectiveness of the algorithms, they ignore the low efficiency in the mid-to-late stage of the clustering process.  There are three reasons: 1) the convergence rate of the alternating optimization algorithm (AO) drops, when the algorithms are in the mid-to-late stage \cite{du1999centroidal}; 2) the FCM variants still need to do a full inverse-distance weighting \cite{xu2019robust}; 3) all samples are still involved in the update of all centers \cite{Zhou2020A}. (see detail analysis in Subsection \ref{subsec3-2}).


%

In this study, we first delve into the relationship between the samples and the centers, and further investigate the characteristic of the clustering process by dividing it into the early stage and the mid-to-late stage. Stemming from those findings, we propose a new accelerated FCM clustering algorithm called \textbf{AMFCM} (\textbf{a}ffinity filtering and \textbf{m}embership scaling based \textbf{FCM}). In the proposed algorithm, a new affinity filtering technique is put forward to precisely identify the complete set of the non-affinity centers of each sample (see \textbf{Definition} \ref{definition1} in Section \ref{sec3}), and a new membership scaling method is suggested to accelerate the whole convergence process of the algorithm.

The main contributions of this paper are as follows:
\begin{enumerate}
\item We design a new affinity filtering scheme, which is composed of $c$ triangle inequalities, to discover all samples-centers affinities. Compared with the previous methods by the triangle inequality in \cite{ding2015yinyang} and \cite{Zhou2020A}, the designed scheme can identify the complete set of the non-affinity center set of each sample more precisely with very low computational complexity. Compared with the works in \cite{Lingras2004Interval, Mitra2006Rough, zhou2011Shadowed, Zhou2018Rough}, the new affinity filtering scheme is parameter-free.
\item We propose a new membership scaling scheme to accelerate FCM convergence, especially in the mid-to-late stage. The new membership scaling scheme sets the membership grades to 0 in the update of the non-affinity centers, which eliminates the effect of the sample on the update of its non-affinity centers and reduces the burden of the fuzzy clusterings in efficiency, and maintains the original update of the remaining centers for each sample.

\item By integrating those schemes with FCM clustering, we propose a new accelerated clustering algorithm called \textbf{AMFCM}, which is the first work that focuses on accelerating the whole convergence process of the FCM-type clusterings.

\item Several experimental results on  synthetic and real-world data sets illustrate that the proposed AMFCM outperforms the state-of-the-art algorithms in efficiency. For example, AMFCM reduces the number of the iteration of FCM by 80$\%$ on average.

\end{enumerate}

The paper is organized as follows.  Section \ref{sec2} presents some preliminaries including notations, C-Means, FCM, and the related clustering algorithm. The research motivation is described in Section \ref{sec3} and a new algorithm is presented in Section \ref{sec4}. The experimental results with discussion are reported in Section \ref{sec5} and Section \ref{sec6} concludes the paper.


\section{preliminaries}\label{sec2}
In this section, some related clusterings are briefly relisted for the convenience of the
following discussion.
\subsection{Notations}
Let a data set be $\mathbf{X}=\{\x_1,\x_2, \cdots, \x_n\}$ with $\x_{j}\in\Real^p$, and the cluster centers be $\V=[\vv_1, \vv_2, \cdots, \vv_c]$, where $\vv_{i}\in\Real^{p}$ is the centroid of the cluster $\C_{i}$ for $i=1,2,...,c$. $t$ is the number of iterations. The distances between $\x_{j}$ and the cluster centers $\V$ are $d_{ij}=\|\x_{j}-\vv_{i}\| (i=1,\cdots,c)$ and they are rearranged in ascending order as $D_{j}^{(1)}\leq D_{j}^{(2)}\leq\cdots\leq D_{j}^{(c)}$.  Displacement of the center $\vv_{i}$ after one update is denoted by $\delta_{i}^{(t)}=d(\vv_{i}^{(t+1)},\vv_{i}^{(t)})$. The membership grade matrix is denoted by $\U=[u_{ij}]\in\Real^{c\times n}$, where $u_{ij}$ represents the grade of $j$th sample belonging to $i$th cluster.

\subsection{C-Means}\label{subsec2-1}
C-Means clustering \cite{Lloyd1982Least}, as the most representative algorithm in the hard clustering, aims to find the $c$ partitions of $\mathbf{X}$ by minimizing the within-cluster sum of the distance from each sample to its nearest center. The underlying objective function is expressed as follows:
\begin{equation}\label{eq_1}
\begin{aligned}
\min_{\U,\V} J_{\textbf{Hard}}(\U,\V)=&\sum_{i=1}^c\sum_{j=1}^n u_{ij}\|\x_{j}-\vv_{i}\|^2,\\
    s.t. \quad &\sum_{i=1}^c u_{ij}=1, u_{ij}= 0 \  \textrm{or} \ 1,
    \end{aligned}
\end{equation}

To solve problem \eqref{eq_1}, which is NP-hard,  C-Means \cite{Lloyd1982Least} consists of two steps: the assignment step assigns each sample to its closest cluster and the update step renews each of the $c$ cluster centers with the centroid of the samples assigned to that cluster. The algorithm repeats those two steps until convergence.
\subsection{Fuzzy C-Means}\label{subsec2-2}
FCM clustering \cite{bezdek1984fcm}, which is a soft clustering, allows a sample to have membership grades in all clusters
instead of exclusively belonging to one single cluster. FCM partitions $\mathbf{X}$ into $c$ clusters by the cluster
centers.  The objective function is expressed as follows:
\begin{equation}\label{eq_fcm}
\begin{aligned}
    \min_{\U,\V} J_{\textbf{Fuzzy}}(\U,\V)=&\sum_{i=1}^c\sum_{j=1}^n u_{ij}^m\|\x_{j}-\vv_{i}\|^2,\\
    s.t. \quad &\sum_{i=1}^c u_{ij}=1, u_{ij}\geq 0,
    \end{aligned}
\end{equation}
with the fuzziness weighting exponent $m>1$.

To optimize problem \eqref{eq_fcm}, FCM usually initializes $\U^{(0)}$, which is a randomly initialized partition matrix,  and updates $\V$ and $\U$ iteratively by
\begin{align}
\vv^{(t+1)}_{i}&=\frac{\sum\limits_{j=1}^{n}\left(u^{(t)}_{ij}\right)^{m}\x_{j}}
{\sum\limits_{j=1}^{n}\left(u^{(t)}_{ij}\right)^{m}},\label{eq_2}\\
u^{(t+1)}_{ij}&=\left[\sum_{k=1}^c\left(\frac{\|\x_{j}-\vv^{(t+1)}_{i}\|}{\|\x_{j}
-\vv^{(t+1)}_{k}\|}\right)^{\frac{2}{m-1}}\right]^{-1},\label{eq_3}
\end{align}
until convergence.

\subsection{Membership Scaling Fuzzy C-Means}\label{subsec2-2}
Membership scaling Fuzzy C-Means clustering algorithm (MSFCM) \cite{Zhou2020A} accelerates the clustering convergence and maintains high clustering quality by using a triangle inequality and membership scaling. Specifically, the triangle inequality \cite{elkan2003using,ding2015yinyang}, which is a tool for mining the samples-centers affinities, is as follows:
\begin{lemma}\label{lem1}
A sample $\x_{j}$ cannot change its nearest cluster after one update, if
\begin{equation}\label{eq_4}
 D_{j}^{(2)}-\max\limits_{1\le i\le c}\delta_{i}\ge D_{j}^{(1)}+\delta_{I_{j}^{*}},
 \end{equation}
where $I_{j}^{*}=\arg{\min\limits_{1\leq i\leq c}\{d_{ij}\}}$.
\end{lemma}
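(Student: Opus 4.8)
The plan is to prove the claim by bounding the post-update sample--center distances with the triangle inequality applied in two opposite directions, and then comparing the resulting bounds under the hypothesis \eqref{eq_4}. Throughout, let $I_{j}^{*}$ denote the index of the nearest center to $\x_{j}$ at iteration $t$, so that $\|\x_{j}-\vv^{(t)}_{I_{j}^{*}}\|=D_{j}^{(1)}$, and recall that $\delta_{i}=\|\vv^{(t+1)}_{i}-\vv^{(t)}_{i}\|$ is the displacement of the $i$th center after the update.

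First I would bound from \emph{above} the distance from $\x_{j}$ to its old nearest center after the update. Since that center moves by $\delta_{I_{j}^{*}}$, the triangle inequality yields
\begin{equation}\label{eq_plan_upper}
\|\x_{j}-\vv^{(t+1)}_{I_{j}^{*}}\|\le \|\x_{j}-\vv^{(t)}_{I_{j}^{*}}\|+\delta_{I_{j}^{*}}=D_{j}^{(1)}+\delta_{I_{j}^{*}}.
\end{equation}
Next I would bound from \emph{below} the post-update distance to every competing center $\vv_{i}$ with $i\ne I_{j}^{*}$. By the reverse triangle inequality, $\|\x_{j}-\vv^{(t+1)}_{i}\|\ge \|\x_{j}-\vv^{(t)}_{i}\|-\delta_{i}$. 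Because $I_{j}^{*}$ attains the smallest distance $D_{j}^{(1)}$, every other old distance is at least the second-smallest, i.e. $\|\x_{j}-\vv^{(t)}_{i}\|\ge D_{j}^{(2)}$ for all $i\ne I_{j}^{*}$; replacing $\delta_{i}$ by $\max_{1\le k\le c}\delta_{k}$ only weakens the bound, giving
\begin{equation}\label{eq_plan_lower}
\|\x_{j}-\vv^{(t+1)}_{i}\|\ge D_{j}^{(2)}-\max_{1\le k\le c}\delta_{k}\qquad (i\ne I_{j}^{*}).
\end{equation}

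Finally I would chain \eqref{eq_plan_upper} and \eqref{eq_plan_lower} through the hypothesis \eqref{eq_4}: for any $i\ne I_{j}^{*}$,
\begin{equation}\label{eq_plan_chain}
\|\x_{j}-\vv^{(t+1)}_{i}\|\ge D_{j}^{(2)}-\max_{1\le k\le c}\delta_{k}\ge D_{j}^{(1)}+\delta_{I_{j}^{*}}\ge \|\x_{j}-\vv^{(t+1)}_{I_{j}^{*}}\|,
\end{equation}
so $\vv_{I_{j}^{*}}$ remains a nearest center after the update and the assignment of $\x_{j}$ is unchanged. There is no serious obstacle here; the argument is a clean two-sided application of the triangle inequality. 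The only point requiring care is the justification that $\|\x_{j}-\vv^{(t)}_{i}\|\ge D_{j}^{(2)}$ for \emph{every} competitor, which is exactly what the sorted ordering $D_{j}^{(1)}\le D_{j}^{(2)}\le\cdots\le D_{j}^{(c)}$ guarantees, together with the fact that $D_{j}^{(1)}$ is realized at the index $I_{j}^{*}$; any ties are harmless since the conclusion only asserts that $I_{j}^{*}$ stays a minimizer.
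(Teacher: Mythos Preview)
Your proof is correct and is exactly the standard two-sided triangle-inequality argument. The paper itself does not supply a proof of Lemma~\ref{lem1} (it is quoted from prior work), but your argument mirrors precisely the proof the paper gives for the closely related Lemma~\ref{lem2}: bound the post-update distance to the old nearest center from above by $D_{j}^{(1)}+\delta_{I_{j}^{*}}$, bound every competitor from below via the reverse triangle inequality, and compare through the hypothesis.
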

The samples whose closeness relationships do not change after one update are filtered out by using the triangle inequality \eqref{eq_4} and $Q$ is taken as the index set of the filtered samples. The membership grades of the filtered samples are
scaled to accelerate the convergence of FCM.  Therefore, the new update scheme for $\U^{(t+1)}$ is as follows:
\begin{align}
u^{(t+1)}_{i,j}&=\left\{
\begin{array}{ll}
  M_j^{(t)}, & j\in Q^{(t)}, i={I^{*}_j} ^{(t)}, \\
  \beta_j^{(t)} u^{(t)}_{i,j}, & j\in Q^{(t)}, i\neq{{I^{*}_j}^{(t)}}, \\
  u^{(t)}_{i,j}, & j\notin Q^{(t)}, 1\le i\le c, \\
\end{array}
\right.\label{eq_new u}
\end{align}
where $M_j^{(t)}=\left[1+(c-1)\left({D_{j}^{(1)}}/{D_{j}^{(c)}}\right)^{\frac{2}{m-1}}\right]^{-1}$, $\beta_j^{(t)}=\tfrac{1-M_{j}^{(t)}}{1-u^{(t)}_{I_{j}^{*},j}}.$
Then the update of $\V$ in MSFCM is also Eq. \eqref{eq_2}. In \cite{Zhou2020A}, MSFCM also
reduces the participation of the filtered samples in the update of their non-affinity centers and increases the participation of the filtered samples in the update of the remaining centers by scaling the memberships. Therefore, MSFCM has good properties, such as fewer number of iterations, lower time consumption, and higher clustering quality.

\section{Motivation}\label{sec3}
In this section, the relationship between the samples and the centers is  first  described by the following new \textbf{Definition} \ref{definition1} and \ref{definition2}.
\begin{definition}
A cluster center $\vv_{i}$  is the \textbf{non-affinity center} of a sample $\x_{j}$, if $\vv_{i}$ cannot  be the nearest center of $\x_{j}$ in next iteration. Let $\mathcal{P}_{j}$ be the set of the non-affinity centers of $\x_{j}, j=1,2,...,n$.
\label{definition1}
\end{definition}
\begin{definition}
A sample $\x_{j}$ is the \textbf{non-affinity sample} of a cluster center $\vv_{i}$, if $\x_{j}$ cannot
belong to $\vv_{i}$ in next iteration. Let $\C_{i-}$ be the set of the non-affinity samples of $\vv_{i}$, and $\overline{\C_{i-}}$ be the set of the remaining samples, $i=1,2,...,c$.
\label{definition2}
\end{definition}

Note that if a sample $\x_{j}$ is the non-affinity sample of $\vv_{i}$ , $\vv_{i}$ is the non-affinity center of $\x_{j}$, ($i \in \mathcal{P}_{j}$, if $ j \in \C_{i-}$), and vice versa. The current nearest center of $\x_{j}$ can not be the non-affinity center of $\x_{j}$ after one iteration. That is, $|\mathcal{P}_{j}|\leq c-1$. Similarly, $\x_{j}$ can not be the non-affinity sample of the current  nearest center of $\x_{j}$.

For improving the convergence speed and clustering quality, the hierarchy of information granules \cite{Bargiela2008, YaoGranular} guides the algorithms to reduce the contributions of the samples in the update of their non-affinity centers and increase the contributions of the samples in the update of the remaining centers. However, two problems need to be solved. One is how to get the set of the non-affinity centers of each sample efficiently. The other is how to formulate the modification benchmarks for the contributions of the samples in in the update of centers. These two problems motivate the proposal of two new schemes.
\subsection{Searching  Non-Affinity Centers by A New Affinity Filtering}\label{subsec3-1}
In MSFCM, the affinity filtering scheme \eqref{eq_4} can produce the non-affinity centers of the samples with low computations. For any sample $\x_{j}$, \eqref{eq_4} can identify the complete non-affinity centers of $\x_{j}$ if $|\mathcal{P}_{j}|=0$ or $c-1$. However, a set of the non-affinity centers of the sample identified by \eqref{eq_4} is incomplete when $|\mathcal{P}_{j}|\neq 0, c-1$. In order to illustrate this situation, a geometric interpretation is shown in Fig. \ref{fig4}.
\begin{figure}[htp]
 \centering
 \subfloat[]{\label{fig4a}\includegraphics[width=0.24\textwidth]{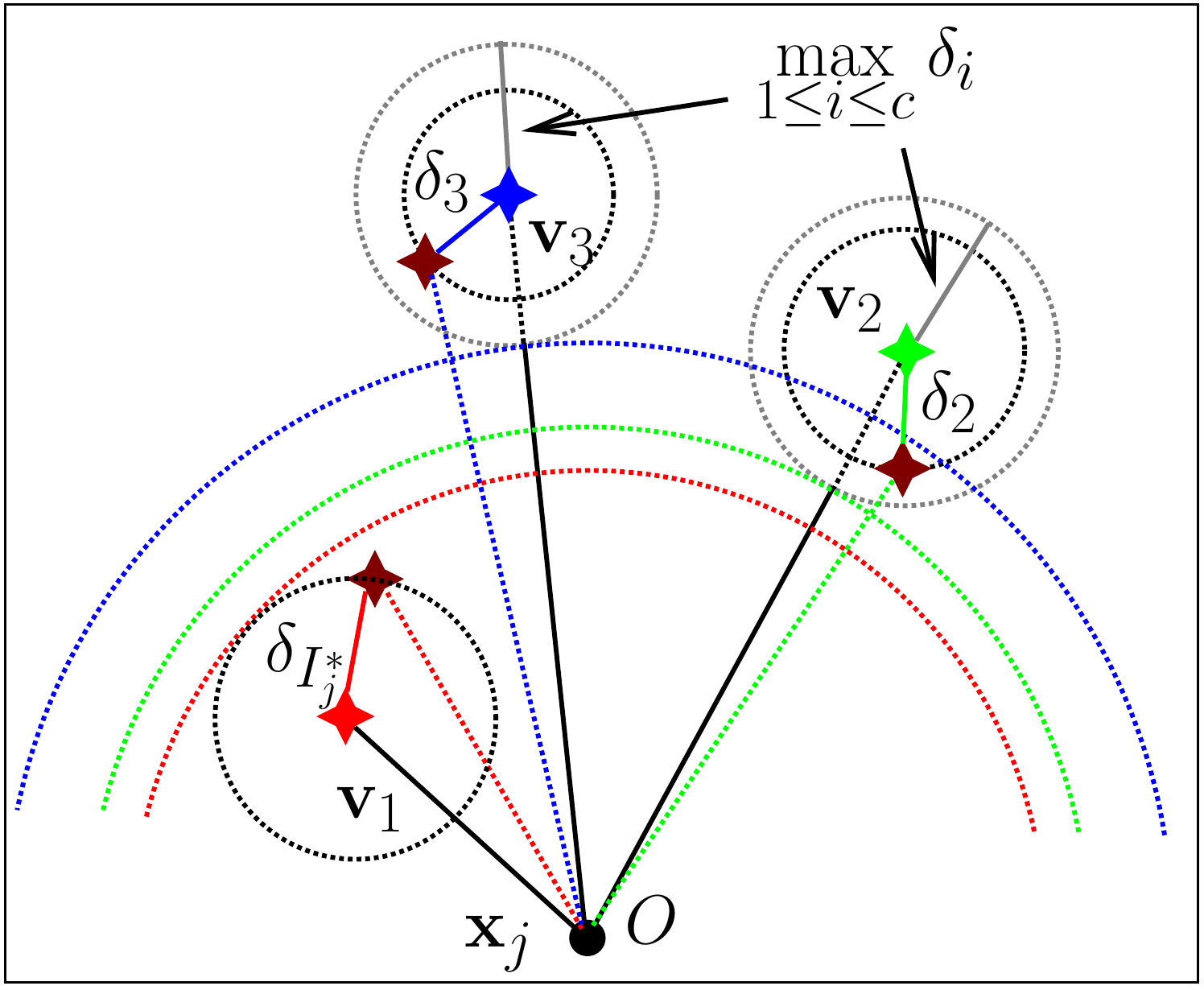}}~~
 \subfloat[]{\label{fig4b}\includegraphics[width=0.24\textwidth]{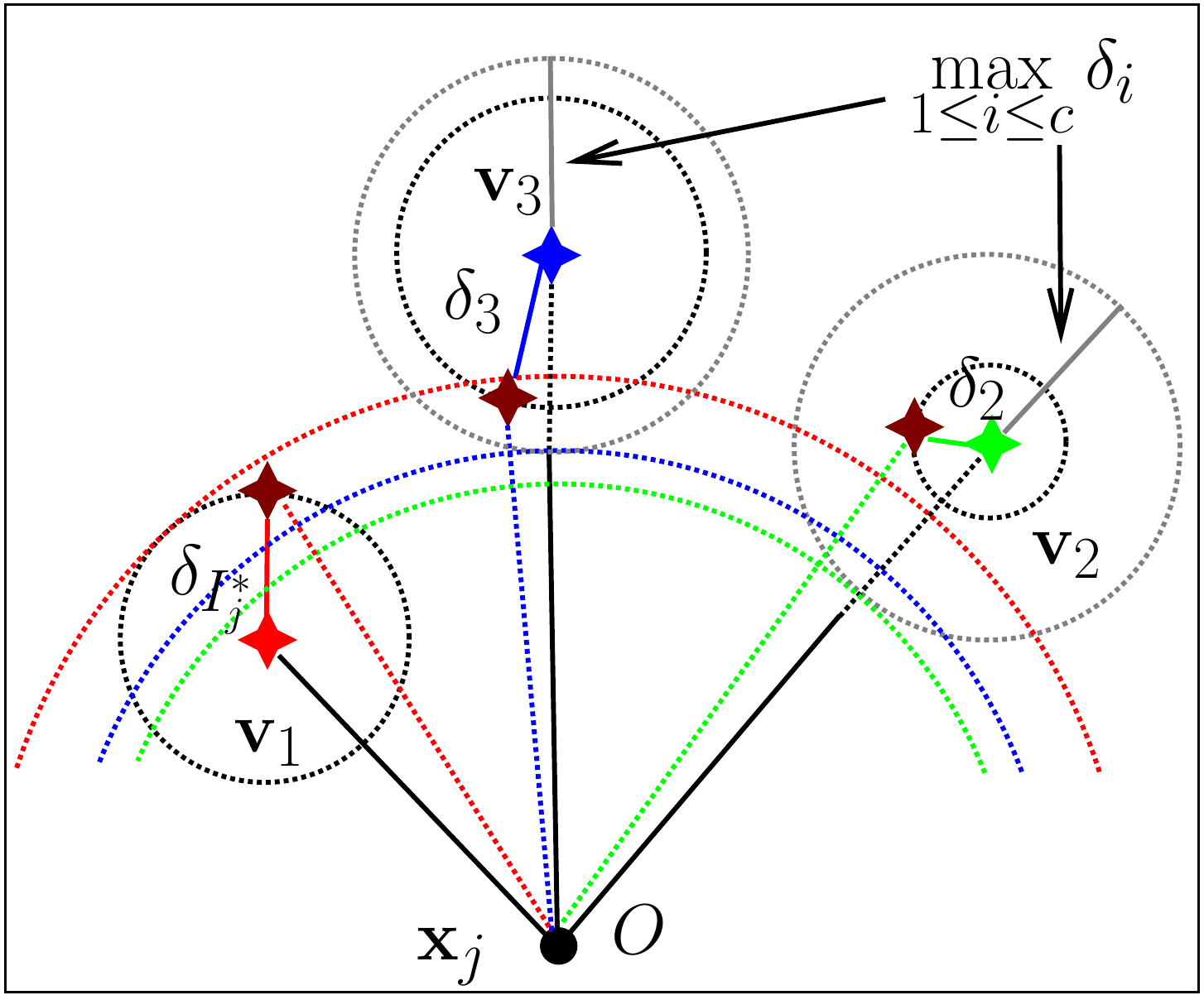}}\\
 \subfloat[]{\label{fig4c}\includegraphics[width=0.24\textwidth]{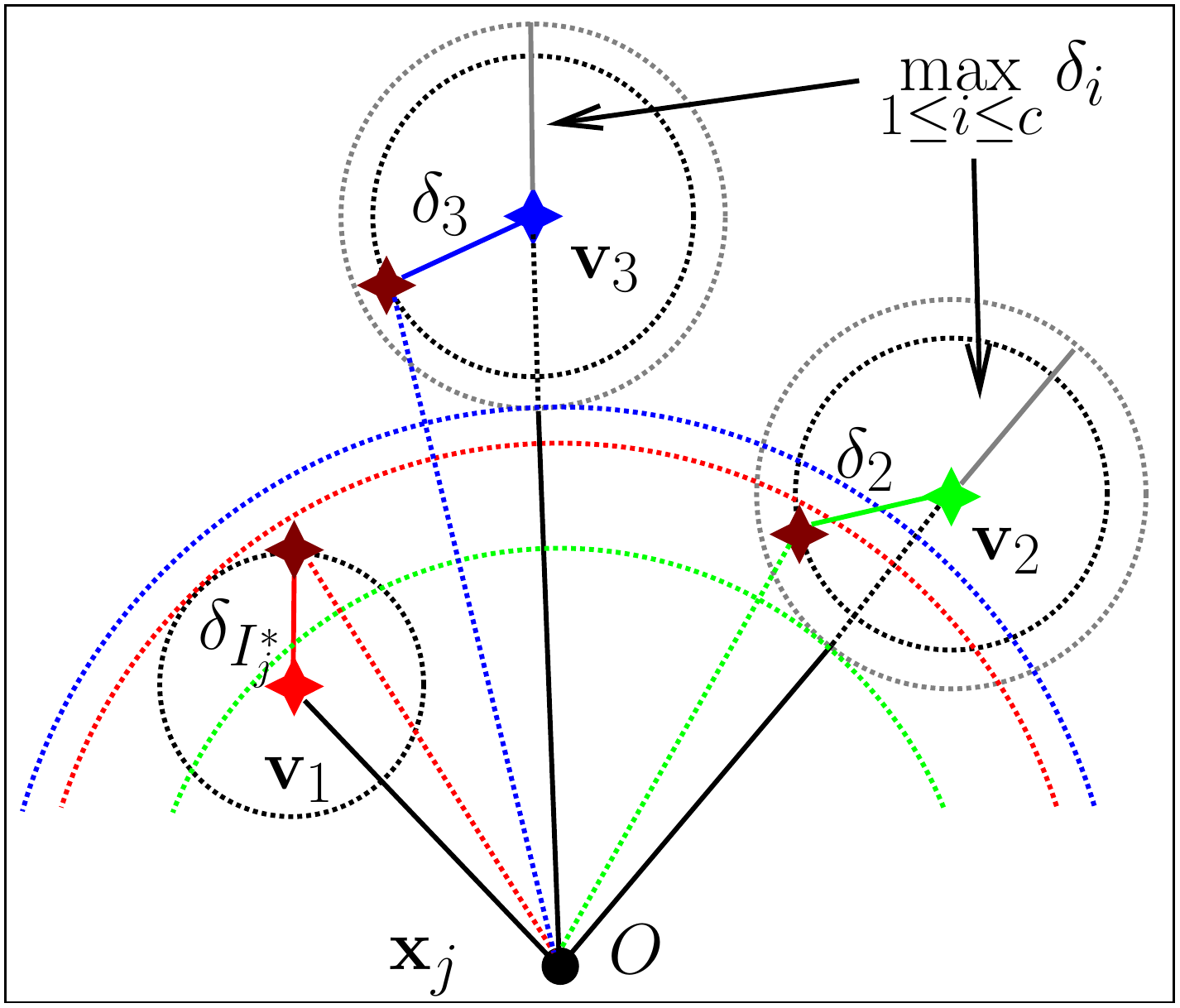}}~~
 \subfloat[]{\label{fig4d}\includegraphics[width=0.24\textwidth]{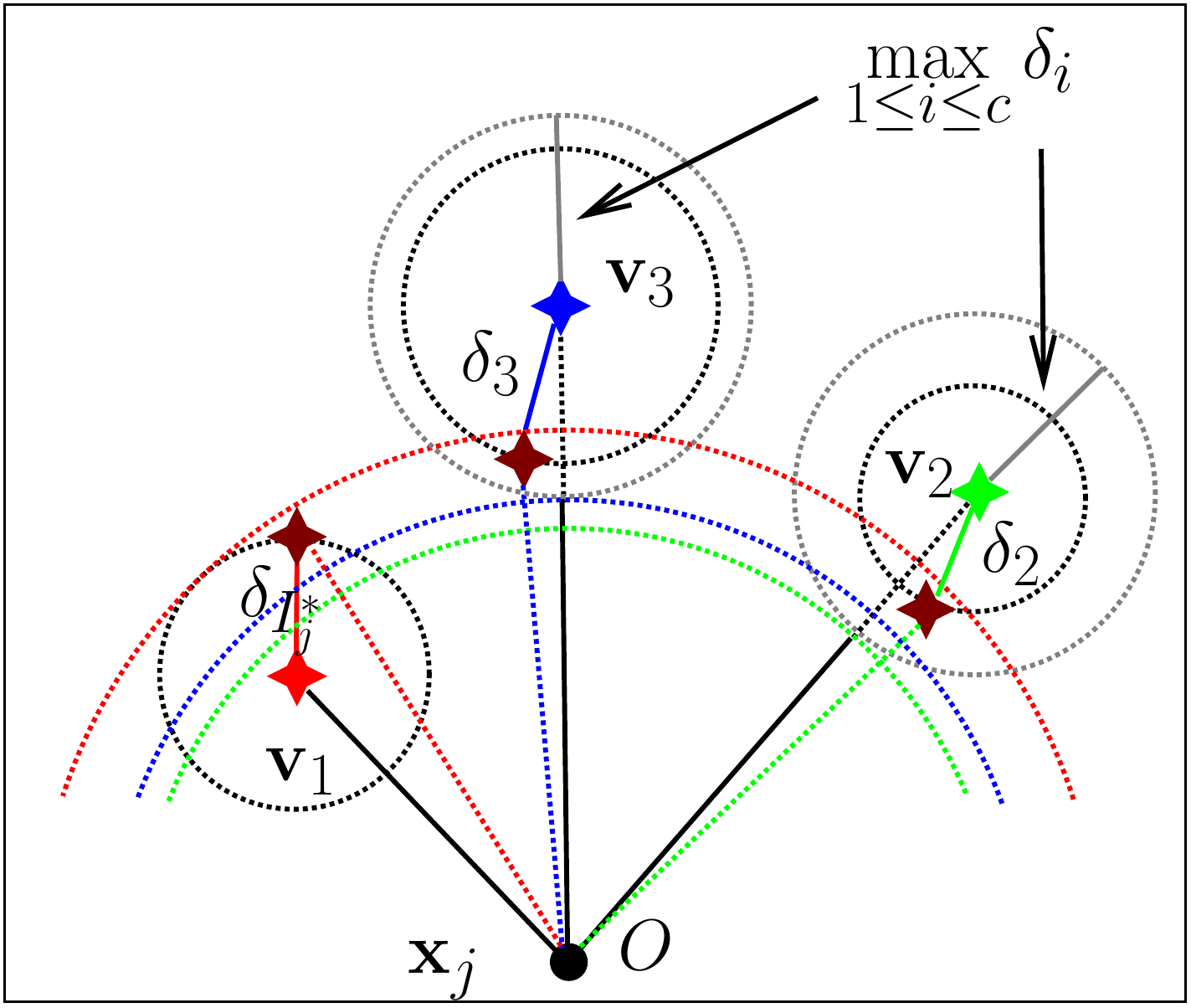}}\\
 \caption{Geometric explanation of identifying the non-affinity centers of $\x_{j}$ by the affinity filtering scheme \eqref{eq_4}. For $\x_{j}$, centers $\vv_{1}$, $\vv_{2}$, $\vv_{3}$ are its nearest, second-nearest, and third-nearest centers, respectively. The brown star points are the possible position of $\vv_{i}$ in next iteration. The radius of the black and gray dot-circles are $\delta_{i}$ and $\max_{1 \leq i \leq c} \delta_{i}$, respectively. In case (\ref{fig4a}), $\vv_{2}$ and $\vv_{3}$ are identified as  the  complete non-affinity centers of $\x_{j}$ by \eqref{eq_4}, where $|\mathcal{P}_{j}|=2$. In case (\ref{fig4b}) and (\ref{fig4c}), the set of the non-affinity centers of $\x_{j}$ is considered empty by \eqref{eq_4}. Actually, the complete set of the non-affinity centers of $\x_{j}$  cannot be accurately identified by \eqref{eq_4} when $|\mathcal{P}_{j}|\neq 0, 2$. For example, $\vv_{2}$ is the non-affinity center of $\x_{j}$ in case (\ref{fig4b}), and  $\vv_{3}$ is the non-affinity center of $\x_{j}$ in case (\ref{fig4c}). In case (\ref{fig4d}), the set of the non-affinity centers of $\x_{j}$ identified by \eqref{eq_4} is complete  because $|\mathcal{P}_{j}|=0$.}\label{fig4}
 \end{figure}

In Fig. \ref{fig4}, centers $\vv_{1}$, $\vv_{2}$, $\vv_{3}$ are its nearest, second-nearest, and third-nearest centers for $\x_{j}$, respectively. Here, $c=3$.  The radius of red dot-arcs is the upper bound of $\vv_{1}$, $d_{1,j}+\delta_{1}$, the radius of green dot-arcs is the lower bound of $\vv_{2}$, $d_{2,j}-\max_{1\le i\le 3}\delta_{i}$, and the radius of blue dot-arcs is the lower bound of $\vv_{3}$, $d_{3,j}-\max_{1\le i\le 3}\delta_{i}$, for $\x_{j}$. Fig. \ref{fig4a} shows the situation, where $|\mathcal{P}_{j}|=2$ for $\x_{j}$. At this time, \eqref{eq_4} ensures that $\vv_{2}$ and $\vv_{3}$ are the complete non-affinity centers of $\x_{j}$. In Fig. \ref{fig4b} and \ref{fig4c}, the set of the non-affinity centers of $\x_{j}$ is considered empty by \eqref{eq_4}. Actually, the complete set of the non-affinity centers of $\x_{j}$ cannot be accurately identified by \eqref{eq_4} when $|\mathcal{P}_{j}|\neq 0, 2$. For example, $\vv_{2}$ is the non-affinity center of $\x_{j}$ in case (\ref{fig4b}), and  $\vv_{3}$ is the non-affinity center of $\x_{j}$ in case (\ref{fig4c}). In Fig. \ref{fig4d}, the set of the non-affinity centers of $\x_{j}$ identified by \eqref{eq_4} is complete because $|\mathcal{P}_{j}|=0$.

For $c \geq 3$, the affinity filtering scheme \eqref{eq_4} always recognizes the incomplete set of non-affinity centers of the sample $\x_{j}$ when $|\mathcal{P}_{j}|\neq 0, c-1$, because  \eqref{eq_4} only employs the lower bound of the second closest center of $\x_{j}$, $D_{j}^{(2)}-\max_{1\le i\le c}\delta_{i}$, to screen all samples-centers affinities. More specifically, the lower bound of its second closest center is used to determine the affinity between $\x_{j}$ and its second closest center, which is also inaccurate. For example, in the case of the same lower bound of $\vv_{2}$, the affinity between $\x_{j}$ and $\vv_{2}$ is different according to \textbf{Definition} \ref{definition1} in Fig. \ref{fig4b} and Fig. \ref{fig4c}. Therefore, the precise identification of all samples-centers affinities should involve the $c$ new lower bounds formed by replacing $\max_{1\le i\le c}\delta_{i}$ with $\delta_{i}$ for $i=1,2,...,c$.

In this paper, a new affinity filtering scheme is proposed, which is composed of $c$ new triangle inequalities. The lower bound of $i$th triangle inequality is $d_{ij}-\delta_{i}$ for $i=1,2,...,c$, as shown in \textbf{Lemma} \ref{lem2}. In conclusion, the new affinity filtering scheme is manipulated to search the complete set of the non-affinity centers of each sample $\x_{j}$ more precisely in any situation, where $0 \leq|\mathcal{P}_{j}|\leq c-1$. To the best of our knowledge, there is no other effort that contributes to discussing the capture of the complete set of the non-affinity centers of each sample in this novel way.
\subsection{Refining the Convergence of FCM}\label{subsec3-2}
In this part, six real-world data sets are clustered by FCM with random initializations. The details of the data sets are shown in Section \ref{sec5}. The iterative fuzzy objective value is applied to analyze the convergence of FCM. The curves of the convergence of FCM with the iteration $t$ are shown in Fig. \ref{fig1}, where the y-axis is the ratio of objectives to the initial value.
\begin{figure}[htp]
\centering
\label{fig3a}\includegraphics[width=0.49\textwidth]{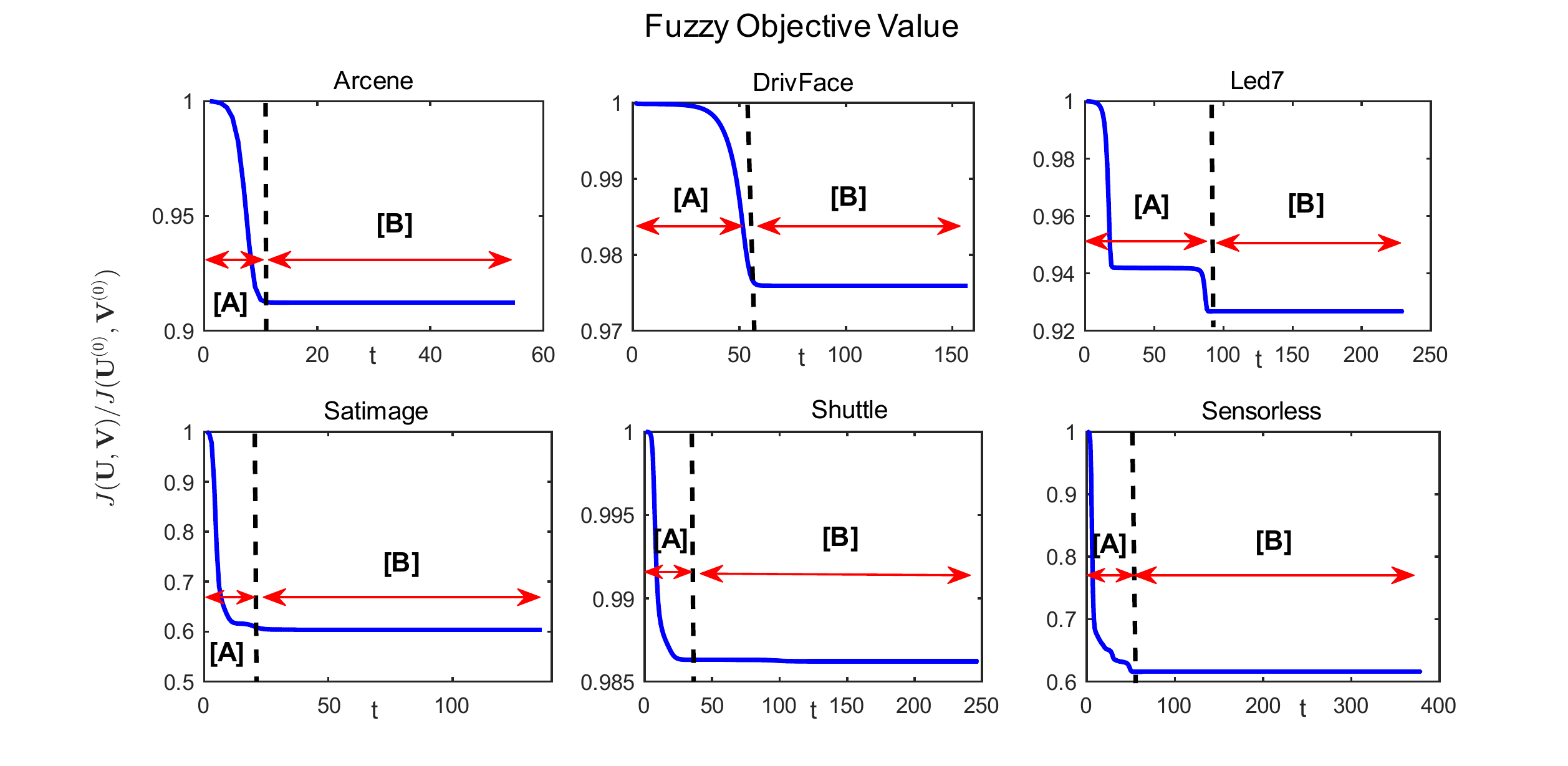}~~~~~~
\caption{Plots of $\frac{J_{\textbf{Fuzzy}}(\U^{(t)}, \V^{(t)})}{J_{\textbf{Fuzzy}}(\U^{(0)}, \V^{(0)})}$ for the iteration $t$ on six real-world data sets with FCM. The initializations is selected randomly for each data set. The plots clearly show that the clustering process of FCM can be divided into stages [A] and [B], where [A] represents the early stage and [B] represents the  mid-to-late stage.}\label{fig1}
\end{figure}

In Fig. \ref{fig1}, it can be observed that the curves of the fuzzy objectives can be divided into stages [A] and [B], where [A] represents the early stage and [B] represents the  mid-to-late stage. At the beginning of the iteration, with random initializations, the belongings of the samples are temporary and uncertain. Therefore, the membership grades are modified for good clustering results, which enlarges the displacement length of the cluster centers to reduce the objective value, as shown in stage [A]. However, the samples in other clusters always continuously interfere with the update of the centers given Eq. \eqref{eq_2}. Once FCM enters stage [B], the convergence efficiency of FCM drops rapidly \cite{du1999centroidal}, which can also be observed that the curves of the fuzzy objectives of FCM are quite stable and long in Fig. \ref{fig1}. According to this observation, it can be inferred that the cluster centers near those target positions and move in a small step in stage [B], where the assignment of most samples will not change, except for the boundary samples between clusters.

To take this question a step further, the update of $\vv_{i}$ is rewritten as $\vv_{i}=(\sum_{j}u_{ij}^{m})^{-1}(\sum_{j\in \C_{i}}u_{ij}^{m}\x_{j}+\sum_{j\notin\C_{i}}u_{ij}^{m}\x_{j})$ in FCM. Clearly,  $(\sum_{j}u_{ij}^{m})^{-1}\sum_{j\in \C_{i}}u_{ij}^{m}\x_{j}$ promotes $\vv_{i}$ to approach its final position, and $(\sum_{j}u_{ij}^{m})^{-1} \sum_{j\notin\C_{i}}u_{ij}^{m}\x_{j}$ prevents $\vv_{i}$ from approaching its final position, which causes the cluster centers still keep fluctuating slightly around the final targets in stage [B]. In this case, although FCM has not converged, the assignment of most samples has been determined. Actually, the fuzzy membership grades are redundant to assess uncertainty for the samples with the unchanging assignment in the current situation. Those membership grades are not only expensive to calculate, but more seriously, the update of those membership grades does not improve clustering quality. To illustrate this problem, a geometric explanation of stage [B] of the convergence process of FCM is presented, as shown in Fig.\ref{fig2}.
\begin{figure}[h]
\centering
\includegraphics[width=0.35\textwidth]{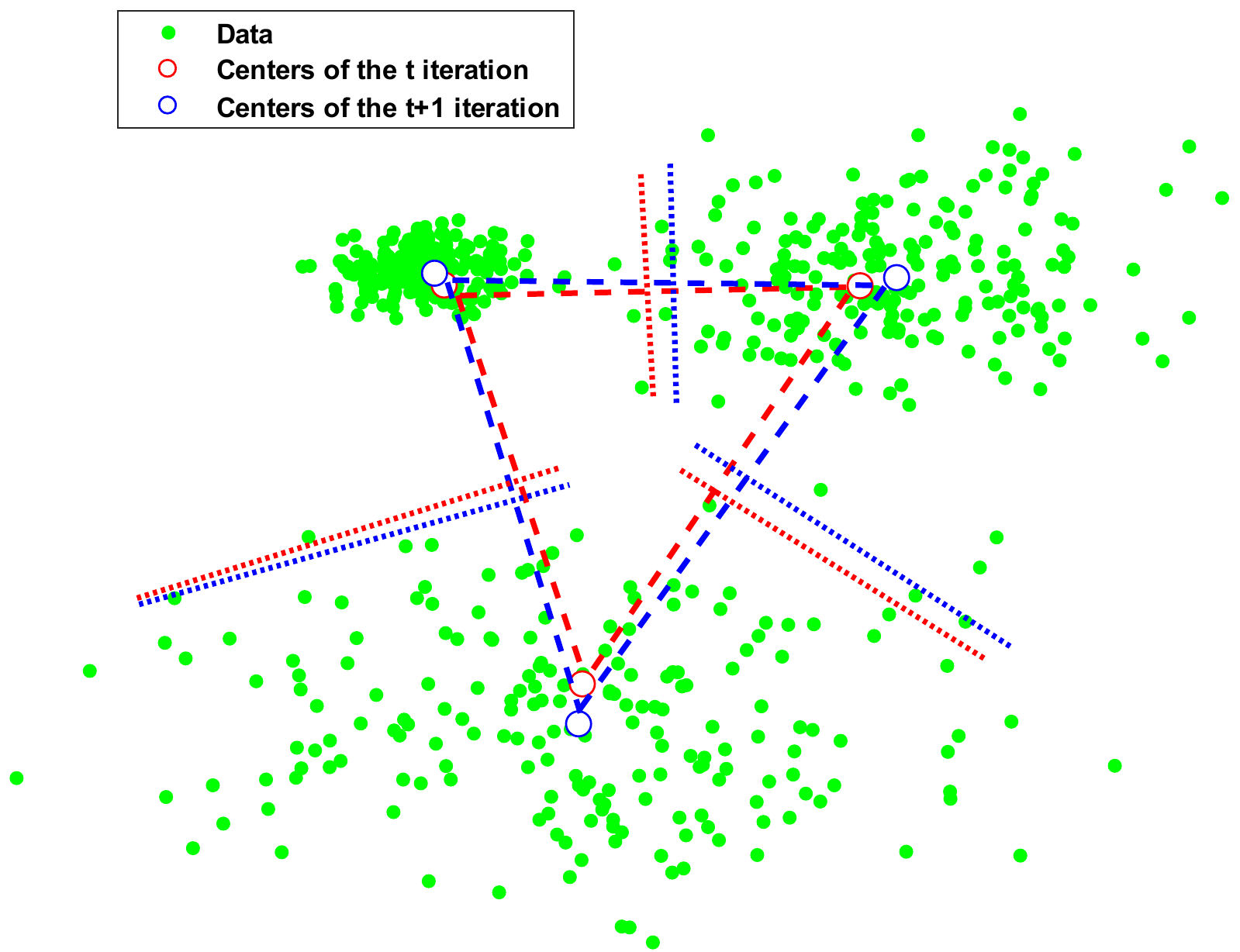}
\caption{Geometric explanation of stage [B] of the convergence process of FCM. The thin red and blue dotted lines are the bisectors between the clusters in these two iterations, respectively. After one iteration in [B] stage, centers still have slight changes around the final targets. Note that the fluctuation of the centers, which hardly affects the convergence result, will occur many times in stage [B]. In this case, the assignment of most samples remains unchanged, and the assignment of some boundary samples, near the thin red and blue dotted lines, is vulnerable to the slight changes of the centers.}\label{fig2}
\end{figure}

In Fig. \ref{fig2}, the assignment of most samples remains unchanged in stage [B]. However, centers always have slight changes around the final targets after one iteration in stage [B] because of the contributions of the samples in other clusters in the update of the centers. Note that the fluctuation of the centers, which hardly affects the convergence result, will occur many times in stage [B], resulting in the slow convergence of stage [B]. In this case, the assignment of some boundary samples, near the thin red and blue dotted lines, is vulnerable to the slight changes of the centers. Therefore, this is the main reason for the slow convergence of stage [B].

Regardless of the defects of the alternating optimization algorithm (AO), the reasons for the slow convergence of FCM can be summarized as the following two points: 
\begin{itemize}
  \item The contributions of the samples in other clusters in the update of the cluster centers continuously delay the whole stages of the convergence process of FCM;
  \item The fuzzy membership grades are redundant to assess uncertainty for the samples with the unchanging assignment in stage [B].
\end{itemize}

As analyzed above, the early stopping in stage [B] can improve the efficiency of the algorithm. An ideal way to do this is to pick an appropriate FCM convergence threshold, $\|\V^{(t+1)}-\V^{(t)}\|$. In fact, the selection of convergence threshold for the early stopping in stage [B] is an extremely difficult task in unsupervised learning. To illustrate this problem, $\|\V^{(t+1)}-\V^{(t)}\|$ for the iteration $t$ on six real-world data sets with FCM are shown in Fig. \ref{fig10}, where the green point is the beginning of stage [B] of FCM. It is found that for any data, the appropriate convergence threshold for the early stopping in stage [B] is difficult to determine without prior information. Therefore, this paper adopts a new feasible approach to effectively accelerate the convergence process of FCM.
\begin{figure}[htp]
\centering
\label{fig10}\includegraphics[width=0.49\textwidth]{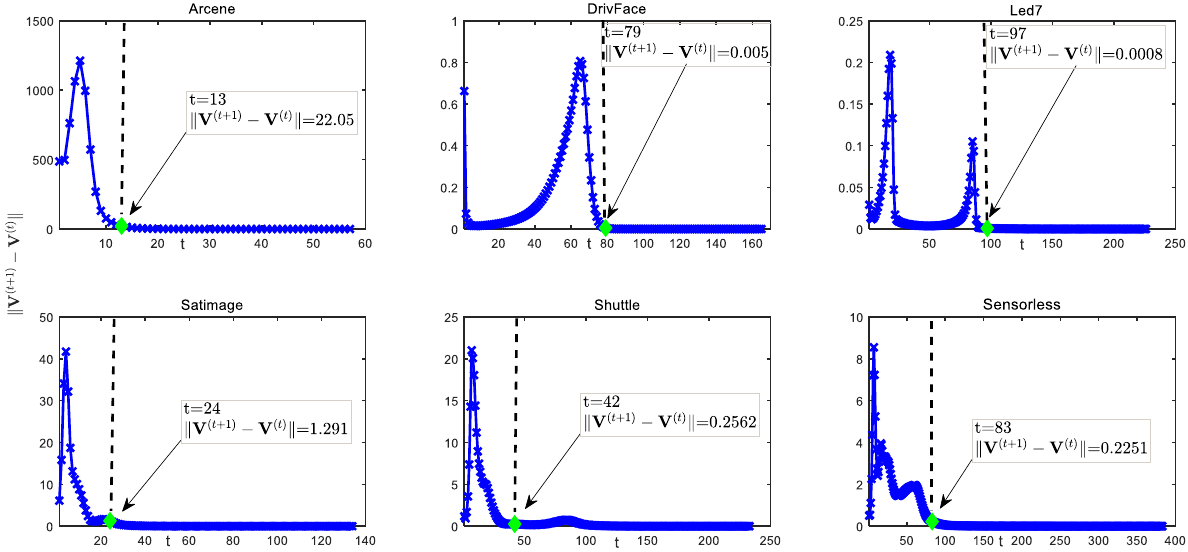}
\caption{Plots of $\|\V^{(t+1)}-\V^{(t)}\|$ for the iteration $t$ on six real-world data sets with FCM.  The plots show that the appropriate convergence threshold for the early stopping in stage [B] is difficult to determine without prior information.}\label{fig10}
\end{figure}

This paper hopes to design a FCM-type clustering algorithm that is a good trade-off between clustering efficiency and quality. According to \textbf{Definition} \ref{definition1} and \ref{definition2},  the update of $\vv_{i}$ can be equivalently written in a novel way as $\vv_{i}=(\sum_{j}u_{ij}^{m})^{-1}(\sum_{j\in \C_{i-}}u_{ij}^{m}\x_{j}+\sum_{j\in\overline{\C_{i-}}}u_{ij}^{m}\x_{j})$. Obviously, in the whole convergence stages of FCM, the contribution of the samples in $\C_{i-}$, $(\sum_{j}u_{ij}^{m})^{-1}(\sum_{j\in \C_{i-}}u_{ij}^{m}\x_{j})$, should be eliminated in the update of $\vv_{i}$ for the  efficiency and quality of the algorithm. This behavior is very significant in improving the efficiency of the algorithm, especially in stage [B], where $\sum_{j\in \C_{i-}}u_{ij}^{m}\x_{j}$ has a large proportion because the assignment of most samples in other clusters does not change. While the contributions of the samples in $\overline{\C_{i-}}$, $(\sum_{j}u_{ij}^{m})^{-1}(\sum_{j\in \overline{\C_{i-}}}u_{ij}^{m}\x_{j})$, should be preserved in the update of $\vv_{i}$, which promotes algorithm convergence and ensures clustering quality. Reasonably, fuzzy membership grades are still meaningful for the samples in $\overline{\C_{i-}}$ in the update of $\vv_{i}$. The details are presented in Section \ref{sec4}.

\section{Accelerated FCM Based on New Affinity Filtering and Membership Scaling}\label{sec4}
In this section, FCM based on new affinity filtering and membership scaling (AMFCM) is proposed to accelerate the whole convergence process of FCM and improve clustering performance. In the proposed AMFCM, a new affinity filtering method is designed to obtain the complete set of the non-affinity information more precisely by a new set of triangle inequalities. Then, a new membership scaling scheme improves the efficiency of FCM convergence in stages [A] and [B]. The operation of the new membership scaling update is two-fold, including the elimination of the contributions of the samples in the update of their non-affinity centers and enhancing the contributions of the samples in the update of the remaining centers.
\subsection{New Affinity Filtering Scheme}\label{subsec4_1}
In the modified C-Means algorithms \cite{elkan2003using,ding2015yinyang}, the triangle inequality that appears in the affinity filtering scheme \eqref{eq_4} is applied to identify all samples-centers affinities. However, the triangle inequality is insufficient for FCM according to the analysis in Section \ref{subsec3-1}. Therefore, a new lemma, which is better applicable to FCM, is presented as follows:
\begin{lemma}\label{lem2}
A cluster center $\vv_{i}$ is the non-affinity center of a sample $\x_{j}$ after one update, if
\begin{equation}\label{eq6}
 d_{ij}^{(t)}-\delta_{i}^{(t)}\geq {D_{j}^{(1)}}^{(t)}+\delta_{I_{j}^{*}}^{(t)}, \quad i\in \{1,2,...,c\},
\end{equation}
where $I_{j}^{*}=\arg{\min\limits_{1\leq i\leq c}\{d_{ij}^{(t)}\}}$.
\end{lemma}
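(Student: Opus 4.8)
The plan is to prove \textbf{Lemma} \ref{lem2} by a direct application of the triangle inequality: I would bound the distance from $\x_{j}$ to the \emph{updated} position of the candidate center $\vv_{i}$ from below, bound the distance from $\x_{j}$ to the updated position of its current nearest center from above, and then show that the hypothesis \eqref{eq6} forces the former to dominate the latter. This immediately yields that $\vv_{i}$ cannot be the nearest center of $\x_{j}$ in the next iteration, which is precisely the defining property of a non-affinity center in \textbf{Definition} \ref{definition1}.

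First I would recall that each center moves by at most $\delta_{i}^{(t)}=d(\vv_{i}^{(t+1)},\vv_{i}^{(t)})$ after one update. Applying the reverse triangle inequality to the triple $\x_{j},\vv_{i}^{(t)},\vv_{i}^{(t+1)}$ gives the lower bound
\[
\|\x_{j}-\vv_{i}^{(t+1)}\|\ge\|\x_{j}-\vv_{i}^{(t)}\|-\|\vv_{i}^{(t+1)}-\vv_{i}^{(t)}\|=d_{ij}^{(t)}-\delta_{i}^{(t)},
\]
while applying the ordinary triangle inequality to the current nearest center (index $I_{j}^{*}$, so that $d_{I_{j}^{*},j}^{(t)}={D_{j}^{(1)}}^{(t)}$) gives the upper bound
\[
\|\x_{j}-\vv_{I_{j}^{*}}^{(t+1)}\|\le\|\x_{j}-\vv_{I_{j}^{*}}^{(t)}\|+\|\vv_{I_{j}^{*}}^{(t+1)}-\vv_{I_{j}^{*}}^{(t)}\|={D_{j}^{(1)}}^{(t)}+\delta_{I_{j}^{*}}^{(t)}.
\]

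The second step is to chain these two estimates through the assumed inequality \eqref{eq6}, obtaining
\[
\|\x_{j}-\vv_{i}^{(t+1)}\|\ge d_{ij}^{(t)}-\delta_{i}^{(t)}\ge{D_{j}^{(1)}}^{(t)}+\delta_{I_{j}^{*}}^{(t)}\ge\|\x_{j}-\vv_{I_{j}^{*}}^{(t+1)}\|.
\]
Thus after the update there exists another center, namely $\vv_{I_{j}^{*}}^{(t+1)}$, that is no farther from $\x_{j}$ than $\vv_{i}^{(t+1)}$, so $\vv_{i}$ cannot be the (unique) nearest center of $\x_{j}$ in the next iteration; by \textbf{Definition} \ref{definition1} this means $\vv_{i}$ is a non-affinity center of $\x_{j}$.

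There is essentially no hard obstacle here, since the whole argument reduces to one reverse-triangle inequality paired with one ordinary triangle inequality; the only points requiring care are bookkeeping rather than genuine difficulty. I would make sure the lower bound is attached to the screened candidate $\vv_{i}$ and the upper bound to the certified-close center $\vv_{I_{j}^{*}}$, since interchanging them breaks the deduction, and I would treat the equality case of the final chain as a tie under which $\vv_{i}$ still fails to be the strict nearest center. As a consistency check worth recording, setting $i=I_{j}^{*}$ collapses \eqref{eq6} to $-\delta_{I_{j}^{*}}^{(t)}\ge\delta_{I_{j}^{*}}^{(t)}$, which can hold only when the nearest center does not move, confirming the remark preceding the lemma that the current nearest center is never flagged as a non-affinity center.
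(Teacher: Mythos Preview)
Your proposal is correct and follows essentially the same approach as the paper: both apply the triangle inequality to obtain the lower bound $d_{ij}^{(t+1)}\ge d_{ij}^{(t)}-\delta_{i}^{(t)}$ and the upper bound $d_{I_{j}^{*}j}^{(t+1)}\le {D_{j}^{(1)}}^{(t)}+\delta_{I_{j}^{*}}^{(t)}$, then chain them through hypothesis \eqref{eq6} and invoke \textbf{Definition}~\ref{definition1}. Your added remarks on the tie case and the consistency check for $i=I_{j}^{*}$ are not in the paper's proof but are harmless clarifications.
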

\begin{proof}
In virtue of the triangle inequality for $i\in \{1,2,...,c\}$, there is that
\begin{align*}
d_{ij}^{(t)}-\delta_{i}^{(t)} &= \|\x_{j}-\vv_{i}^{(t)}\|-\|\vv_{i}^{(t+1)}-\vv_{i}^{(t)}\|
&\leq d_{ij}^{(t+1)}.
\end{align*}

Similarly, there is that
\begin{align*}
{D_{j}^{(1)}}^{(t)}+\delta_{I_{j}^{*}}^{(t)}=\|\x_{j}-\vv_{I_{j}^{*}}^{(t)}\|+\|\vv_{I_{j}^{*}}^{(t)}-\vv_{I_{j}^{*}}^{(t+1)}\|
\geq  d_{I_{j}^{*}j}^{(t+1)}.
\end{align*}

If $d_{ij}^{(t)}-\delta_{i}^{(t)}\geq {D_{j}^{(1)}}^{(t)}+\delta_{I_{j}^{*}}^{(t)}$ holds, we have $d_{ij}^{(t+1)} \geq  d_{I_{j}^{*}j}^{(t+1)}$. Therefore, $\vv_{i}$ cannot be the nearest center of $\x_{j}$ after one update. According to \textbf{Definition} \ref{definition1}, we can conclude that $\vv_{i}$ is the non-affinity center of $\x_{j}$.
\end{proof}

Compared with the previous \textbf{Lemma} \ref{lem1}, which can filter the complete set of the non-affinity centers only when $|\mathcal{P}_{j}|=0$ or $c-1$ for each sample $\x_{j}$, \textbf{Lemma} \ref{lem2} provides a more efficient and precise affinity filtering scheme for exploring any situation, where $0 \leq|\mathcal{P}_{j}|\leq c-1$.

According to \textbf{Lemma} \ref{lem2}, a new affinity filtering scheme, which consists of $c$ new triangular inequalities that are the same as \eqref{eq6}, is proposed to more precisely search the complete set of the non-affinity centers of each sample $\x_{j}$ by employing the lower bounds of all centers in any situation, where $0 \leq|\mathcal{P}_{j}|\leq c-1$. Furthermore, the new affinity filtering scheme is parameter-free, which does not need to seek extra thresholds, which determine the non-affinity centers of the samples. A geometric interpretation for the new affinity filtering scheme \eqref{eq6} is given in \textbf{Appendix} \ref{sec7-1}.

In this part, \textbf{Lemma} \ref{lem2} provides a new affinity filtering scheme, which can search the complete non-affinity centers of each sample in any situation, where $0 \leq|\mathcal{P}_{j}|\leq c-1$.  As analyzed above, the computational cost of the new affinity filtering condition is very low, because the additional calculations concern only the displacements of the centers. Next, a new membership scaling scheme is proposed to accelerate the whole convergence stages of FCM in Subsection \ref{subsec4-2}.

\subsection{ New Membership Scaling Scheme}\label{subsec4-2}
As mentioned above in Subsection \ref{subsec3-2},  the update of $\vv_{i}$ is written as $(\sum_{j}u_{ij}^{m})^{-1}(\sum_{j\in \C_{i-}}u_{ij}^{m}\x_{j}+\sum_{j\in\overline{\C_{i-}}}u_{ij}^{m}\x_{j})$. Based on the new affinity filtering scheme \eqref{eq6}, which can autonomously identify the complete set of the non-affinity center of each sample $\x_{j}$, $\mathcal{P}_{j}$, per iteration without much computational cost and extra thresholds, it can be concluded that $\x_{j} \in \C_{i-}$, if $i \in \mathcal{P}_{j}$; otherwise, $\x_{j} \in \overline{\C_{i-}}$, $i=1,2,...,c$.

In this paper, the contributions of the samples in the update of their non-affinity centers should be eliminated and the contributions of the samples in the update of the remaining centers should be increased by the membership grades for the clustering efficiency and quality of the algorithms. Therefore, a new membership scaling technique is cleverly designed to better integrate all samples-centers affinities. In the new membership scaling scheme, the membership grades are set to 0 to eliminate the contributions of the samples in the update of their non-affinity centers, which reduces the burden of the fuzzy clusterings in efficiency. And the fuzzy membership grades still be applied for the remaining centers. Here, the new membership scaling scheme for $\tilde{\U}^{(t)}$ is
\begin{align}
\tilde{u}^{(t)}_{ij}&=\left\{
\begin{array}{ll}
\left[\sum_{k\notin\mathcal{P}_{j}^{(t)}}\left(\frac{d_{ij}^{(t)}}{d_{kj}^{(t)}}\right)^{\frac{2}{m-1}}\right]^{-1}, & i\notin \mathcal{P}_{j}^{(t)}, \\[1cm]
\qquad \qquad \qquad 0 , & i \in \mathcal{P}_{j}^{(t)}. \\
\end{array}
\right.\label{eq_modified u}
\end{align}

Note that the update of $\tilde{u}_{ij}$ is equivalent to the normalization of the membership grades, $\tilde{u}^{(t)}_{ij}=u^{(t)}_{ij}/\sum_{k\notin\mathcal{P}_{j}^{(t)}} u^{(t)}_{kj}$, for $i\notin \mathcal{P}_{j}$. The update of $\V$ is the same way as FCM (Eq. \eqref{eq_2}). Obviously, this scheme is simple and efficient. The required distances between the samples and the current centers, $d_{ij}^{(t)}$, have been calculated. Therefore,  the additional calculations concern only the displacements of the centers in the determination of $\mathcal{P}_{j}$ based on \textbf{Lemma} \ref{lem2}. The complexity analysis will be shown in Subsection \ref{subsec4-3}.
\subsection{ The Proposed Algorithm}\label{subsec4-3}
Accelerated FCM based on new affinity filtering and membership scaling (\textbf{AMFCM}) integrating with traditional one is herein proposed. In this new algorithm, after a traditional FCM iteration, the current $\U$ is adjusted by using the new affinity filtering and membership scaling scheme. The proposed algorithm is presented as follows in Algorithm \ref{alg1}.
\begin{algorithm}[htp!]
	\caption{\textbf{AMFCM}}
	\label{alg1}
	\begin{algorithmic}[1]
        \REQUIRE Date set $\mathbf{X}=\{\x_1,\x_2, \cdots, \x_n\}$, cluster number $c$, fuzzy exponent $m$, and convergence threshold $\varepsilon$;
		\ENSURE Cluster center $\V$.
			\STATE Initialize cluster centers $\V^{(0)}$ and set $t:=0$;
        \STATE Compute $d_{ij}^{(t)}=\|\x_{j}-\vv^{(t)}_{i}\|$, $i=1,...,c, j=1,...,n$;\label{step3}
        \STATE Compute $\U^{(t)}$ with $u^{(t)}_{ij}=\left[\sum_{k=1}^c\left(d^{(t)}_{ij}/d^{(t)}_{kj}\right)^{\frac{2}{m-1}}\right]^{-1}$;\label{step5}
        \STATE Compute  $\bar{\V}^{(t+1)}$ with $\bar{\vv}_{i}^{(t+1)}=\frac{\sum_{j=1}^{n}\left(u^{(t)}_{ij}\right)^{m}\x_{j}}{\sum_{j=1}^{n}\left(u^{(t)}_{ij}\right)^{m}}$;\label{alg_lin0}
            \STATE Compute $\delta_{i}^{(t)}=\|\bar{\vv}_{i}^{(t+1)}-\vv_{i}^{(t)}\|$ for $i=1,2,...,c$; \label{alg_lin1}

		\FOR{$j=1$ to $n$}\label{alg_lin2}
                \STATE  $\mathcal{P}_{j}^{(t)}=\{1 \leq i \leq\ c \mid d^{(t)}_{ij}-\delta^{(t)}_{i} \geq {D_{j}^{(1)}}^{(t)}{}+\delta^{(t)}_{I_{j}^{*}}\}$;
                 \STATE Compute $\tilde{u}_{ij}^{(t)}$ according to Eq. \eqref{eq_modified u};\label{alg_lin4}\\
		\ENDFOR\label{alg_lin3}
			\STATE  Compute $\V^{(t+1)}$ with $\vv_i^{(t+1)}=\frac{\sum_{j=1}^{n}\left(\tilde{u}^{(t)}_{ij}\right)^{m}\x_{j}}
{\sum_{j=1}^{n}\left(\tilde{u}^{(t)}_{ij}\right)^{m}}$;
		\IF {$\|{\V^{(t+1)}-\V^{(t)}}\|\geq\varepsilon$}
			\STATE Set $t:=t+1$;\\
			\STATE Goto Step \ref{step3};
		\ELSE
        \RETURN $\V=\V^{(t+1)}$;
        \ENDIF
	\end{algorithmic}
\end{algorithm}

For simplicity, the flowchart of AMFCM is as follows:
\begin{equation*}
    \V^{(t)}\xrightarrow[\eqref{eq6},~\eqref{eq_modified
    u}]{\xrightarrow{\eqref{eq_3}}\U^{(t)} \xrightarrow{\eqref{eq_2}}\bar{\V}^{(t+1)}, \delta_{i}^{(t)}=\|\bar{\vv}_{i}^{(t+1)}-\vv_{i}^{(t)}\|}\tilde{\U}^{(t)}\xrightarrow{\eqref{eq_2}} \V^{(t+1)}.
  \end{equation*}

There are the following comments for AMFCM.
\begin{itemize}
  \item As shown above, the difference between AMFCM and FCM is the calculation of $\tilde{\U}^{(t)}$, which is the novelty of AMFCM. This inserted part can improve the efficiency and quality of the clustering algorithms.
  \item The extra cost of AMFCM over FCM is Step \ref{alg_lin0}-\ref{alg_lin3}. The cost in Step \ref{alg_lin0} is $\mathcal{O}(ncp)$, the cost of computing $\delta_{i},(1\le i\le c)$ in Step \ref{alg_lin1} is only $\mathcal{O}(cp)$, and the new cluster filtering technique \eqref{eq6} needs $\mathcal{O}(n(c-1))$, where $d_{ij}^{(t)}$ in \eqref{eq6} and \eqref{eq_modified u} has been calculated in Step \ref{step3}. Therefore, the cost of AMFCM is $\mathcal{O}(3ncp)$ per iteration (the cost of FCM is $\mathcal{O}(2ncp)$ per iteration).  
  \item The time complexity of FCM is $\mathcal{O}(nc^{2}pt_{\textbf{FCM}})$ \cite{kolen2002reducing, bhat2022reducing}, where $t_{\textbf{FCM}}$ is the iteration of FCM. For the time complexity of AMFCM, the update of the centers $\V_{\textbf{AMFCM}}$ requires $\mathcal{O}((\sum_{j=1}^{n} |\mathcal{P}_{j}|) p)$ per iteration, where  $1\le |\mathcal{P}_{j}| \le c$ for $j=1,2,...,n$. With the update of the membership grade matrix $\tilde{\U}$, the time complexity of AMFCM is  $\mathcal{O}(nc^{2}pt_{\textbf{AMFCM}})$, where $t_{\textbf{AMFCM}}$ is the iteration of AMFCM. Based on \textbf{Theorem} \ref{theorem3}, $t_{\textbf{AMFCM}} < t_{\textbf{FCM}}$. Aa a result, AMFCM can save the running time.
  
  \item For any membership grade of AMFCM, ${\widetilde{u}}_{ij}\in\left[0,1\right]$ for $i=1,2,...,c$ and $j=1,2,...,n$, based on Eq. \eqref{eq_modified u}. Obviously, AMFCM is a combination of the hard and soft clustering algorithms, which sets the membership grades to 0 for eliminating the contributions of the samples in the update of their non-affinity centers. Meanwhile, the fuzzy-type update way is performed for the remaining centers. In particular, AMFCM is a parameter-free and adaptive clustering algorithm, which can autonomously determine all samples-centers affinities and the update method.
\end{itemize}

\section{Theoretical Analysis} \label{sec6}
In this section, the convergence  properties of AMFCM are provided. First, the flowchart of FCM and AMFCM in $t$ iteration are as follows:
\begin{equation*}
\begin{aligned}
  \textbf{FCM}:~~& \V^{(t)}\xrightarrow{\eqref{eq_3}}\U_{\textbf{FCM}}^{(t)}\xrightarrow{\eqref{eq_2}} \V_{\textbf{FCM}}^{(t+1)}. \\
    \textbf{AMFCM}:~~& \V^{(t)}\xrightarrow[\delta_{i}^{(t)}=\| {\vv_{\textbf{FCM}}}_{i}^{(t+1)}-\vv_{i}^{(t)}\|, \eqref{eq6},~\eqref{eq_modified
    u}]{\xrightarrow{\eqref{eq_3}}\U_{\textbf{FCM}}^{(t)}\xrightarrow{\eqref{eq_2}}\V_{\textbf{FCM}}^{(t+1)}}\tilde{\U}^{(t)}\xrightarrow{\eqref{eq_2}} \V_{\textbf{AMFCM}}^{(t+1)}.\\
    \end{aligned}
  \end{equation*}
  
\begin{theorem}
The number of the iteration of AMFCM is smaller than that of FCM in the clustering process.
\label{theorem3}
\end{theorem}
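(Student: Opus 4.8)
The plan is to compare FCM and AMFCM iteration by iteration and to show that each AMFCM update drives the centers closer to their stationary positions than the corresponding FCM update, so that the stopping test $\|\V^{(t+1)}-\V^{(t)}\|<\varepsilon$ is met after fewer iterations. The starting observation, read off from the two flowcharts, is that at every iteration $t$ both methods share the same input $\V^{(t)}$ and compute the \emph{same} fuzzy partition $\U_{\textbf{FCM}}^{(t)}$ by Eq. \eqref{eq_3}; the sole difference is the membership scaling \eqref{eq_modified u} that turns $\U_{\textbf{FCM}}^{(t)}$ into $\tilde{\U}^{(t)}$ before the centers are recomputed by Eq. \eqref{eq_2}. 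Hence the argument reduces to comparing $\V_{\textbf{FCM}}^{(t+1)}$ with $\V_{\textbf{AMFCM}}^{(t+1)}$ for a common pair $(\V^{(t)},\U_{\textbf{FCM}}^{(t)})$.

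First I would use the decomposition of the center update introduced in Subsection \ref{subsec3-2},
\[
\vv_i=\Big(\textstyle\sum_{j}u_{ij}^{m}\Big)^{-1}\Big(\textstyle\sum_{j\in \C_{i-}}u_{ij}^{m}\x_{j}+\textstyle\sum_{j\in\overline{\C_{i-}}}u_{ij}^{m}\x_{j}\Big).
\]
By \textbf{Lemma} \ref{lem2}, every sample with $j\in\C_{i-}$ (equivalently $i\in\mathcal{P}_j$) cannot have $\vv_i$ as its nearest center after the update, so these samples lie on the far side of the cluster boundary and their weighted contribution $\sum_{j\in\C_{i-}} u_{ij}^{m}\x_{j}$ pulls $\vv_i$ away from the centroid of its genuine members. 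The scaling \eqref{eq_modified u} sets exactly these memberships to $0$ and renormalizes the rest upward, since $\tilde{u}_{ij}=u_{ij}/\sum_{k\notin\mathcal{P}_j}u_{kj}\ge u_{ij}$ for $i\notin\mathcal{P}_j$, so that $\vv_{\textbf{AMFCM},i}^{(t+1)}$ becomes the boosted centroid of the affinity samples in $\overline{\C_{i-}}$ alone. The key per-center estimate I would establish is
\[
\big\|\vv_{\textbf{AMFCM},i}^{(t+1)}-\hat{\vv}_i\big\|\le\big\|\vv_{\textbf{FCM},i}^{(t+1)}-\hat{\vv}_i\big\|,
\]
where $\hat{\vv}_i$ denotes the limiting position of cluster $i$; the inequality holds because deleting the out-of-cluster term removes the component of the FCM step that points away from $\hat{\vv}_i$.

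Then I would aggregate across clusters and iterations. Summing the per-center estimate gives $\|\V_{\textbf{AMFCM}}^{(t+1)}-\hat{\V}\|\le\|\V_{\textbf{FCM}}^{(t+1)}-\hat{\V}\|$, so AMFCM stays uniformly at least as close to the stationary set as FCM after the same number of steps, and strictly closer whenever some $\mathcal{P}_j\neq\emptyset$, which by the stage-[B] analysis holds throughout the long mid-to-late phase. Since both schemes are monotone AO iterations approaching the stationary set, a trajectory that is uniformly closer reaches the $\varepsilon$-neighborhood first, yielding $t_{\textbf{AMFCM}}<t_{\textbf{FCM}}$.

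The hard part will be this aggregation step, because FCM and AMFCM are different nonlinear maps with, in general, different fixed points, so ``distance to the limit'' must be handled with care: one must argue that both trajectories converge to essentially the same partition $\hat{\V}$ — justified by the fact that affinity filtering only discards memberships that are provably irrelevant to the final assignment — and that the reduced inter-cluster coupling strictly contracts the per-step error. A clean way to close this gap is to invoke the local linear-convergence rate of FCM \cite{hathaway1988recent}: I would show that the Jacobian of the AMFCM update map has strictly smaller spectral radius than that of FCM near $\hat{\V}$, since zeroing the non-affinity memberships removes precisely the cross-cluster sensitivities that slow FCM, giving a smaller linear rate and hence fewer iterations to cross the threshold $\varepsilon$.
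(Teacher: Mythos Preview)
Your approach is quite different from the paper's, and it carries a genuine gap that the paper's argument avoids entirely.

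The paper never compares either trajectory to a limiting configuration $\hat{\V}$. Instead it works at the level of the \emph{increments of the membership matrix}. In stage~[B] it isolates the set $\tilde{\mathbf X}^{(t)}$ of samples whose assignment is stable between iterations $t$ and $t+1$, and observes that for such $\x_j$ the AMFCM membership column is (essentially) unchanged, $\tilde u^{(t)}_{\cdot j}=\tilde u^{(t+1)}_{\cdot j}$: the entries with $i\in\mathcal P_j$ are frozen at $0$ by \eqref{eq_modified u}, and the surviving entries are determined by the same affinity set. The corresponding FCM column $u^{(t)}_{\cdot j}$, on the other hand, continues to drift because all $c$ distances re-enter Eq.~\eqref{eq_3}. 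This yields $\|\tilde\U^{(t+1)}-\tilde\U^{(t)}\|<\|\U^{(t+1)}-\U^{(t)}\|$ directly, and the conclusion $t_{\textbf{AMFCM}}<t_{\textbf{FCM}}$ follows for a common threshold $\varepsilon$. No fixed point, no Jacobian, and no assumption that the two maps share a limit.

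Your plan hinges on the per-center estimate $\|\vv_{\textbf{AMFCM},i}^{(t+1)}-\hat\vv_i\|\le\|\vv_{\textbf{FCM},i}^{(t+1)}-\hat\vv_i\|$, and this is where it breaks. First, the inequality is asserted, not proved: removing the contribution $\sum_{j\in\C_{i-}}u_{ij}^m\x_j$ is not a projection toward $\hat\vv_i$, because nothing guarantees that the non-affinity samples all sit in a half-space opposite to $\hat\vv_i$. Second, and more seriously, the reference point $\hat\vv_i$ is ill-defined for the comparison you want: FCM and AMFCM are different nonlinear maps with, in general, different fixed points, as you yourself note. Saying that affinity filtering ``only discards memberships that are provably irrelevant to the final assignment'' does not make the two limits coincide---it only says the hard partitions agree, not the soft centroids. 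The Jacobian/spectral-radius route you sketch at the end would be a legitimate, independent argument, but it is a substantial separate theorem (the AMFCM map is piecewise defined through the filtering rule \eqref{eq6}, so even writing its Jacobian is delicate) and is far heavier than what the statement needs.

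If you want to salvage a short proof, abandon ``distance to $\hat\V$'' and compare \emph{successive iterates} instead, as the paper does. The zeroed memberships then contribute identically zero to $\tilde\U^{(t+1)}-\tilde\U^{(t)}$ on the stable set, which is exactly the mechanism that shortens stage~[B]; translating this into a bound on $\|\V^{(t+1)}-\V^{(t)}\|$ via Eq.~\eqref{eq_2} is then a one-line Lipschitz step.
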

\begin{proof}
The proof is given in \textbf{Appendix} \ref{sec7-4}. 
\end{proof}
  
 \begin{theorem}
AMFCM does not converge precociously in the mid stage of the clustering process.
\label{theorem1}
\end{theorem}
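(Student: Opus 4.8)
The plan is to argue by contraposition: I would show that whenever AMFCM satisfies its stopping test $\|\V^{(t+1)}-\V^{(t)}\|<\varepsilon$ at an iteration $t$ that belongs to the mid stage [B], the underlying partition has genuinely stabilized, so the termination cannot be premature. First I would make the notion of \emph{precocious convergence} precise for this setting: it is the event that the displacement $\|\V^{(t+1)}_{\textbf{AMFCM}}-\V^{(t)}\|$ drops below $\varepsilon$ while the boundary samples---those lying close to the bisector of their two nearest centers, as depicted in Fig.~\ref{fig2}---are still switching their effective assignment. The whole argument then reduces to showing that such boundary samples are never eliminated by the new affinity filtering, so their retained contributions keep the centers moving as long as genuine uncertainty remains.

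The key step is a filtering-exclusion property for boundary samples. For a sample $\x_{j}$ whose nearest and second-nearest centers are $\vv_{I_{j}^{*}}$ and $\vv_{i}$ with $D_{j}^{(2)}-D_{j}^{(1)}<\delta_{i}+\delta_{I_{j}^{*}}$, the filtering condition \eqref{eq6} applied to $\vv_{i}$ reads $d_{ij}-\delta_{i}=D_{j}^{(2)}-\delta_{i}\geq D_{j}^{(1)}+\delta_{I_{j}^{*}}$, which is violated precisely by this boundary hypothesis; hence $i\notin\mathcal{P}_{j}$. Combining this with the already-noted fact that $I_{j}^{*}\notin\mathcal{P}_{j}$ always (so $|\mathcal{P}_{j}|\leq c-1$), every boundary sample keeps a strictly positive renormalized membership to both of its two nearest centers through \eqref{eq_modified u}. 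Thus in the AMFCM center update obtained by combining \eqref{eq_2} with \eqref{eq_modified u}, namely $\vv_{i}^{(t+1)}=\big(\sum_{j\in\overline{\C_{i-}}}(\tilde u_{ij}^{(t)})^{m}\big)^{-1}\sum_{j\in\overline{\C_{i-}}}(\tilde u_{ij}^{(t)})^{m}\x_{j}$, the index set $\overline{\C_{i-}}$ still contains every boundary sample that has $\vv_{i}$ among its two nearest centers.

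From here I would finish with a lower-bound argument on the center displacement. Since a still-unsettled boundary sample shifts its renormalized memberships between its two nearest centers from one iteration to the next, the weighted centroid $\vv_{i}^{(t+1)}$ is pushed away from $\vv_{i}^{(t)}$; bounding this perturbation below by the membership change of the boundary sample times its distance to the center shows that $\|\V^{(t+1)}_{\textbf{AMFCM}}-\V^{(t)}\|$ stays at or above $\varepsilon$ as long as any boundary assignment is in flux. Consequently the stopping test can fire only after all boundary samples have stabilized, i.e. after the partition has genuinely converged; together with \textbf{Theorem}~\ref{theorem3} this establishes that AMFCM terminates early relative to FCM \emph{without} terminating precociously in stage [B].

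I expect the last step---turning the qualitative statement ``unsettled boundary samples keep the centroid moving'' into a quantitative lower bound on $\|\V^{(t+1)}_{\textbf{AMFCM}}-\V^{(t)}\|$ that remains above the fixed threshold $\varepsilon$---to be the main obstacle, because it requires controlling how a change in a few boundary memberships propagates through the normalization in \eqref{eq_modified u} and the centroid averaging in \eqref{eq_2}, and because $\varepsilon$ is a user-chosen constant the analysis cannot shrink. If that quantitative bound proves too delicate, a cleaner fallback is to prove the idealized contrapositive with $\varepsilon=0$: assume $\V^{(t+1)}_{\textbf{AMFCM}}=\V^{(t)}$ and show this forces each center to be the exact renormalized centroid of its retained samples $\overline{\C_{i-}}$, a stationary point of the induced objective, and hence not a spurious mid-stage halt.
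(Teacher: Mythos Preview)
Your proposal takes a genuinely different route from the paper, and the gap you yourself flag is real and not easily closed.

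The paper does \emph{not} argue through the stopping criterion $\|\V^{(t+1)}-\V^{(t)}\|<\varepsilon$ or through boundary-sample retention at all. Instead it works entirely with objective-function ratios. Starting from the same centers $\V^{(t)}$, it writes $\tilde u_{ij}^{(t)}=\alpha_j^{(t)}\,u_{ij}^{(t)}$ for $i\notin\mathcal P_j^{(t)}$ with $1<\alpha_j^{(t)}=(1-\sum_{i\in\mathcal P_j^{(t)}}u_{ij}^{(t)})^{-1}<2$, and then sandwiches the AMFCM fuzzy objective:
\[
J_{\textbf{FCM}}^{(t)}<J_{\textbf{AMFCM}}^{(t)}<\bigl[\max_j\alpha_j^{(t)}\bigr]^{m}\,J_{\textbf{FCM}}^{(t)},\qquad
J_{\textbf{AMFCM}}^{(t+1)}<\bigl[\max_j\alpha_j^{(t)}\bigr]^{m}\,J_{\textbf{FCM}}^{(t+1)}.
\]
Dividing gives $J_{\textbf{AMFCM}}^{(t)}/J_{\textbf{AMFCM}}^{(t+1)}>\lambda^{(t)}/[\max_j\alpha_j^{(t)}]^{m}$, where $\lambda^{(t)}=J_{\textbf{FCM}}^{(t)}/J_{\textbf{FCM}}^{(t+1)}$ is FCM's decline ratio. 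The ``mid stage'' is then \emph{defined} as the regime where $\lambda^{(t)}>[\max_j\alpha_j^{(t)}]^{m}$, and in that regime the AMFCM objective is still strictly decreasing, hence no premature stagnation. So the paper's notion of ``not converging precociously'' is ``the objective keeps dropping,'' not ``the center displacement stays above $\varepsilon$.''

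Your filtering-exclusion observation for boundary samples is correct and is a nice structural fact, but it does not by itself yield the theorem as the paper states it. The obstacle you identify---getting a uniform lower bound on $\|\V^{(t+1)}_{\textbf{AMFCM}}-\V^{(t)}\|$ that dominates a user-chosen $\varepsilon$---is essentially unavoidable on your route, because a single boundary sample's membership oscillation can be made arbitrarily small relative to the mass of $\overline{\C_{i-}}$, so the induced centroid perturbation can fall below any fixed $\varepsilon$ while the sample is still ``unsettled.'' Your $\varepsilon=0$ fallback proves a fixed-point statement, which is a different (and weaker) claim than non-precocious convergence in the mid stage. If you want to align with the paper, reframe the argument in terms of the fuzzy objective and the scaling factors $\alpha_j^{(t)}$ rather than the displacement stopping test.
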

\begin{proof}
The proof is given in \textbf{Appendix} \ref{sec7-2}. 
\end{proof}

  \begin{theorem}
For the samples $\x_{j}$ with $|\mathcal{P}_{j}^{(t)}|=1$, the corresponding hard objective value of AMFCM is smaller than that of FCM in $t$ iteration.
\label{theorem2}
\end{theorem}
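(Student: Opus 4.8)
The plan is to compare, at iteration $t$, the two membership matrices $\U^{(t)}$ (produced by FCM) and $\tilde{\U}^{(t)}$ (produced by AMFCM) by substituting each into the hard objective $J_{\textbf{Hard}}(\U,\V^{(t)})=\sum_{i,j}u_{ij}\|\x_j-\vv_i^{(t)}\|^2$ evaluated at the common centers $\V^{(t)}$, and to show that for every sample with $|\mathcal{P}_j^{(t)}|=1$ the AMFCM contribution is strictly smaller. Because both matrices are paired with the \emph{same} centers and $J_{\textbf{Hard}}$ decouples over $j$ and is linear in the memberships, it suffices to argue one sample at a time. First I would fix such an $\x_j$, let $i_0$ denote its unique non-affinity center, and record from Eq. \eqref{eq_modified u} that $\tilde u_{i_0 j}=0$ while $\tilde u_{kj}=u_{kj}/(1-u_{i_0 j})$ for $k\neq i_0$; note $u_{i_0 j}>0$ since the FCM update \eqref{eq_3} is strictly positive.

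Writing the per-sample contributions $H_j^{\textbf{FCM}}=\sum_{i=1}^c u_{ij}d_{ij}^2$ and $H_j^{\textbf{AMFCM}}=\sum_{k\neq i_0}\tilde u_{kj}d_{kj}^2$, a one-line computation using $\sum_{k\neq i_0}u_{kj}=1-u_{i_0 j}$ gives $H_j^{\textbf{AMFCM}}=\tfrac{1}{1-u_{i_0 j}}\sum_{k\neq i_0}u_{kj}d_{kj}^2$, and hence the key identity
\begin{equation*}
H_j^{\textbf{FCM}}=(1-u_{i_0 j})\,H_j^{\textbf{AMFCM}}+u_{i_0 j}\,d_{i_0 j}^2 .
\end{equation*}
Thus $H_j^{\textbf{FCM}}$ lies on the segment between $H_j^{\textbf{AMFCM}}$ and $d_{i_0 j}^2$, so $H_j^{\textbf{FCM}}-H_j^{\textbf{AMFCM}}=u_{i_0 j}\bigl(d_{i_0 j}^2-H_j^{\textbf{AMFCM}}\bigr)$, and the desired strict inequality $H_j^{\textbf{AMFCM}}<H_j^{\textbf{FCM}}$ is \emph{equivalent} to the single scalar inequality $H_j^{\textbf{AMFCM}}<d_{i_0 j}^2$.

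It then remains to establish $H_j^{\textbf{AMFCM}}<d_{i_0 j}^2$, and this is where I expect the main obstacle to lie. Since $H_j^{\textbf{AMFCM}}$ is a convex combination of the retained squared distances $\{d_{kj}^2:k\neq i_0\}$, it is bounded above by $\max_{k\neq i_0}d_{kj}^2$, so it suffices to show the excluded non-affinity center is the farthest, i.e. $d_{i_0 j}=D_j^{(c)}$. That $d_{i_0 j}>D_j^{(1)}$ is immediate, since the nearest center $I_j^*$ can never be non-affinity by Lemma \ref{lem2}; the delicate part is ruling out a \emph{retained} center that is farther than $i_0$. I would close this via the filtering condition \eqref{eq6}: if some $k\neq i_0$ had $d_{kj}>d_{i_0 j}$ yet $k\notin\mathcal{P}_j$, then $d_{kj}-\delta_k< {D_{j}^{(1)}}^{(t)}+\delta_{I_j^*}\le d_{i_0 j}-\delta_{i_0}$, forcing an anomalously large displacement $\delta_k>\delta_{i_0}+(d_{kj}-d_{i_0 j})$; in the operative small-displacement regime of the mid-to-late stage analyzed in Subsection \ref{subsec3-2} this cannot occur, so $|\mathcal{P}_j^{(t)}|=1$ indeed forces $i_0$ to be the unique farthest center and every retained $d_{kj}^2<d_{i_0 j}^2$.

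With $d_{i_0 j}^2$ strictly exceeding each retained squared distance, the convex combination satisfies $H_j^{\textbf{AMFCM}}<d_{i_0 j}^2$, whence $H_j^{\textbf{AMFCM}}<H_j^{\textbf{FCM}}$ for each $\x_j$ with $|\mathcal{P}_j^{(t)}|=1$; summing these strict inequalities over all such samples gives the claim that the hard objective value of AMFCM is smaller than that of FCM at iteration $t$. I would note that the conclusion is robust even outside the small-displacement regime, since the inverse-distance weighting built into \eqref{eq_modified u} assigns negligible mass to any far retained center, so the scalar inequality $H_j^{\textbf{AMFCM}}<d_{i_0 j}^2$ can alternatively be verified directly from the weights; I would therefore present the displacement bound above as the clean sufficient condition and flag it as the one genuinely nontrivial step of the argument.
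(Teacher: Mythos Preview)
Your reading of both the quantity being compared and the stage at which it is compared differs from the paper's, and the route you take has a genuine gap.

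First, the paper's ``hard objective value'' is $J_{\textbf{Hard}}$ of \eqref{eq_1} with $0$--$1$ memberships, evaluated at the \emph{updated} center sets $\V_{\textbf{FCM}}^{(t+1)}$ versus $\V_{\textbf{AMFCM}}^{(t+1)}$; it is not obtained by plugging the fuzzy $u_{ij}^{(t)}$ into the linear sum $\sum_i u_{ij}d_{ij}^2$ at the common $\V^{(t)}$. At a fixed center set the hard objective depends only on $\V$, so under your set-up there would be nothing to compare. The paper's argument instead uses that (on the sample set in question) the scaled memberships satisfy $\tilde u_{I_j^* j}^{(t)}=1$ and $\tilde u_{ij}^{(t)}=0$ for $i\neq I_j^*$, whence $J_{\textbf{AMFCM}}^{(t+1)}$ coincides with the hard objective at $\V_{\textbf{AMFCM}}^{(t+1)}$; optimality of the center update then gives $J_{\textbf{AMFCM}}^{(t+1)}\le \sum_j\|\x_j-{\vv_{\text{FCM}}}_{I_j^*}^{(t+1)}\|^2$, and the right-hand side is the hard objective at $\V_{\textbf{FCM}}^{(t+1)}$ because non-affinity guarantees the nearest label $I_j^*$ persists. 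Note that the step $\tilde u_{I_j^* j}=1$ only follows from ``$|\mathcal{P}_j^{(t)}|=1$'' when $c=2$ (equivalently, when all but the nearest center are non-affinity), so the paper is effectively in that regime; your literal ``one center excluded, $c-1$ retained'' reading is not what the proof uses.

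Second, within your own framework the reduction to $H_j^{\textbf{AMFCM}}<d_{i_0 j}^2$ has a real hole. Lemma~\ref{lem2} does not force the unique excluded center $i_0$ to be the farthest: a retained index $k$ can satisfy $d_{kj}>d_{i_0 j}$ provided $\delta_k$ is large enough that \eqref{eq6} fails for it. Concretely, with $c=3$, $m=2$, $d_{1j}=1$, $d_{2j}=1.1$, $d_{3j}=100$, small $\delta_1,\delta_2$ and $\delta_3>99$, one gets $\mathcal P_j=\{2\}$, yet $H_j^{\textbf{AMFCM}}=2/(1+10^{-4})\approx 2>1.21=d_{2j}^2$. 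Your ``small-displacement regime'' and inverse-weighting remarks are heuristics, not proofs; the paper avoids this entirely by working in the case where $\tilde u$ is already $0$--$1$ and invoking the optimality of the center update rather than any distance ordering.
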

\begin{proof}
The proof is given in \textbf{Appendix} \ref{sec7-3}.
\end{proof}

In the next section, several experiments are performed to illustrate the efficiency of the proposed algorithm.

\section{Experimental Results} \label{sec5}
To verify the effectiveness and efficiency of the proposed algorithm, experimental studies are carried out on synthetic and real-world data sets, respectively. AMFCM is compared with another seven clustering algorithms, including:
\begin{enumerate}
  \item Fuzzy C-Means (FCM)  \cite{bezdek1984fcm},
  \item Rough Fuzzy C-Means (RFCM)  \cite{Mitra2006Rough},
  \item Shadowed Set-based Rough C-Means (SRCM), Shadowed Set-based Rough Fuzzy C-Means $\textrm{\uppercase\expandafter{\romannumeral1}}$ (SRFCM $\textrm{\uppercase\expandafter{\romannumeral1}}$), and Shadowed Set-based Rough Fuzzy C-Means $\textrm{\uppercase\expandafter{\romannumeral2}}$ (SRFCM $\textrm{\uppercase\expandafter{\romannumeral2}}$)  \cite{zhou2011Shadowed},
  \item Rough-Fuzzy Clustering based on Two-stage Three-way Approximations (ARFCM) \cite{Zhou2018Rough},
  \item Membership Scaling Fuzzy C-Means (MSFCM) \cite{Zhou2020A}.
\end{enumerate}

These algorithms are chosen because they use different techniques to reduce the contributions of the samples in the update of their non-affinity centers and increase the contributions of the samples in the update of the remaining centers for good clustering quality and fast convergence.

All experiments are run on a computer with an Intel Core i7-6700 processor and a maximum memory of 8GB for all processes; the computer runs Windows 7 with MATLAB R2017a. The experimental setup and the evaluation metrics used for clustering performance are first described. The fuzziness weighting exponent $m=2$ and the termination parameter $\varepsilon=10^{-6}$ for all algorithms. In addition, the weight exponent of the core region $w_{l}=0.95$ and the weight exponent of the boundary region $w_{b}=1-w_{l}$ for RFCM \cite{Mitra2006Rough}, SRCM, SRFCM $\textrm{\uppercase\expandafter{\romannumeral1}}$ and SRFCM
$\textrm{\uppercase\expandafter{\romannumeral2}}$ \cite{zhou2011Shadowed}, ARFCM \cite{Zhou2018Rough}.

\subsection{Evaluation Metrics}
In order to evaluate the performances of the newly proposed clustering algorithms, three external metrics, including the overall F-measure for the entire data set ($\textbf{F}^{*}$), Normalized Mutual Information (\textbf{NMI}), and Adjusted Rand Index (\textbf{ARI}) \cite{parker2013accelerating, mei2016large,Hubert1985Comparingpartitions}, are used. All three metrics are used to measure the agreement of the ground truth and the clustering results produced by an algorithm. The metrics that do not require the labels of data are also used for performance evaluation, called the internal metrics. The three internal validity metrics are selected, including \textbf{PC} \cite{James1973Cluster},  \textbf{DBI} \cite{Davies1979Cluster}, and \textbf{XB} \cite{xie1991validity}.
\begin{align}
\textbf{PC}&=\frac{1}{n}\sum_{i=1}^{c}\sum_{j=1}^{n} u_{ij}^{2},\\
\textbf{DBI}&=\frac{1}{c}\sum\limits_{k=1}^{c}\max_{i \neq k} {\frac{\frac{1}{|C_{i}|}\sum\limits_{\x_{j} \in C_{i}}
d_{ij}^{2}+\frac{1}{|C_{k}|}\sum\limits_{\x_{j}\in C_{k}} d_{kj}^{2}}{\|\vv_{i}-\vv_{k}\|^{2}}},\\
\textbf{XB}&=\frac{\sum_{i=1}^{c}\sum_{j=1}^{n} u_{ij}^{m}d_{ij}^{2}}{n\min_{i\neq k}\|\vv_{k}-\vv_{i}\|^{2}}.
\end{align}
Note that \textbf{Time} and \textbf{Iteration} are the remaining two evaluation metrics for expressing the efficiency of the algorithms.

\subsection{Experiments on Synthetic Data Sets}
In the first set of experiments, to test the efficiency of AMFCM in the whole convergence stages, two synthetic data sets in $\Real^{2}$ are implemented to observe the convergence path. The first synthetic data set contains three clusters, which are generalized by the two-dimensional Gaussian distribution with mean vector $\mu_{i}$ and covariance matrix $\Sigma_{i}$, $i=1,2,3$. The number of data in each cluster is 200, and the corresponding parameters $\mu_{i}$ and $\Sigma_{i}$ are $\mu_{1}=[10, 10]$, $\Sigma_{1}=\scriptsize{\setlength{\arraycolsep}{1.5pt}\begin{bmatrix} 0.3&0\\0&0.3\end{bmatrix}}$, $\mu_{2}=[13, 10]$, $\Sigma_{2}=\scriptsize{\setlength{\arraycolsep}{1.5pt}\begin{bmatrix} 0.8&0\\0&0.8\end{bmatrix}}$, and $\mu_{3}=[11, 4]$, $\Sigma_{3}=\scriptsize{\setlength{\arraycolsep}{1.5pt}\begin{bmatrix} 1.2&0\\0&1.2\end{bmatrix}}$, respectively. To further reflect the effect of the contributions of the samples in other clusters on the convergence of the algorithms, the second synthetic data set is designed, in which some samples are added to the first synthetic data set. These two synthetic data sets are called D1 and D2, respectively.

To illustrate intuitively, the visualized figures of the convergence trajectories of FCM, MSFCM, and AMFCM on D1 and D2 for comparison in this part, where same initializations are selected,  are shown as Fig. \ref{fig6}. \textbf{Time} and \textbf{Iterations} are selected to characterize the performance of the algorithms. Here, $t_{\textrm{A}}$ and $t_{\textrm{B}}$ are defined as the number of iteration of the algorithm in stages [A] and [B], respectively.
\begin{figure*}[t]
\centering
\subfloat[On D1]{\includegraphics[width=0.25\textwidth]{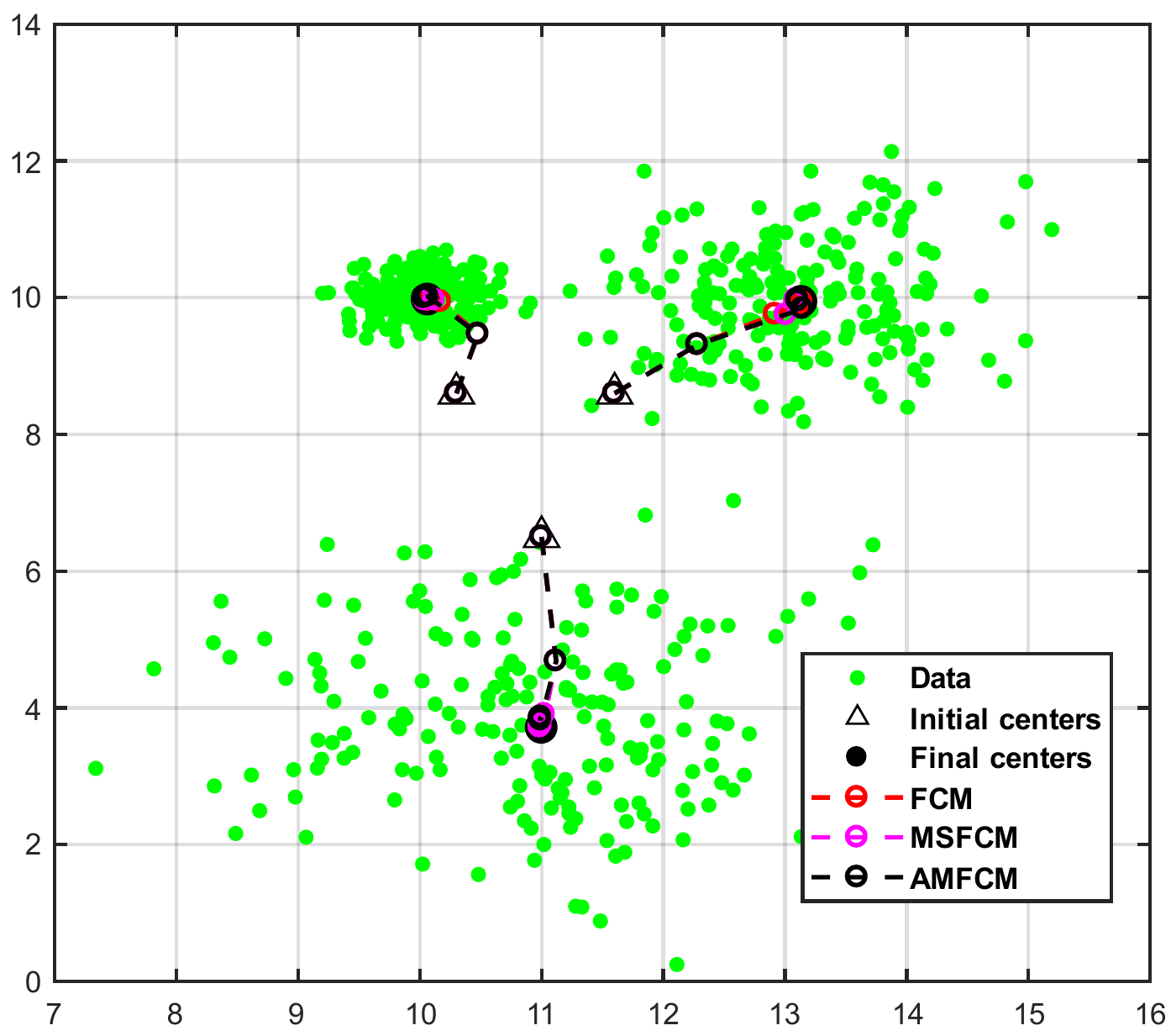}\label{fig6a}}
\subfloat[FCM on D2]{\includegraphics[width=0.25\textwidth]{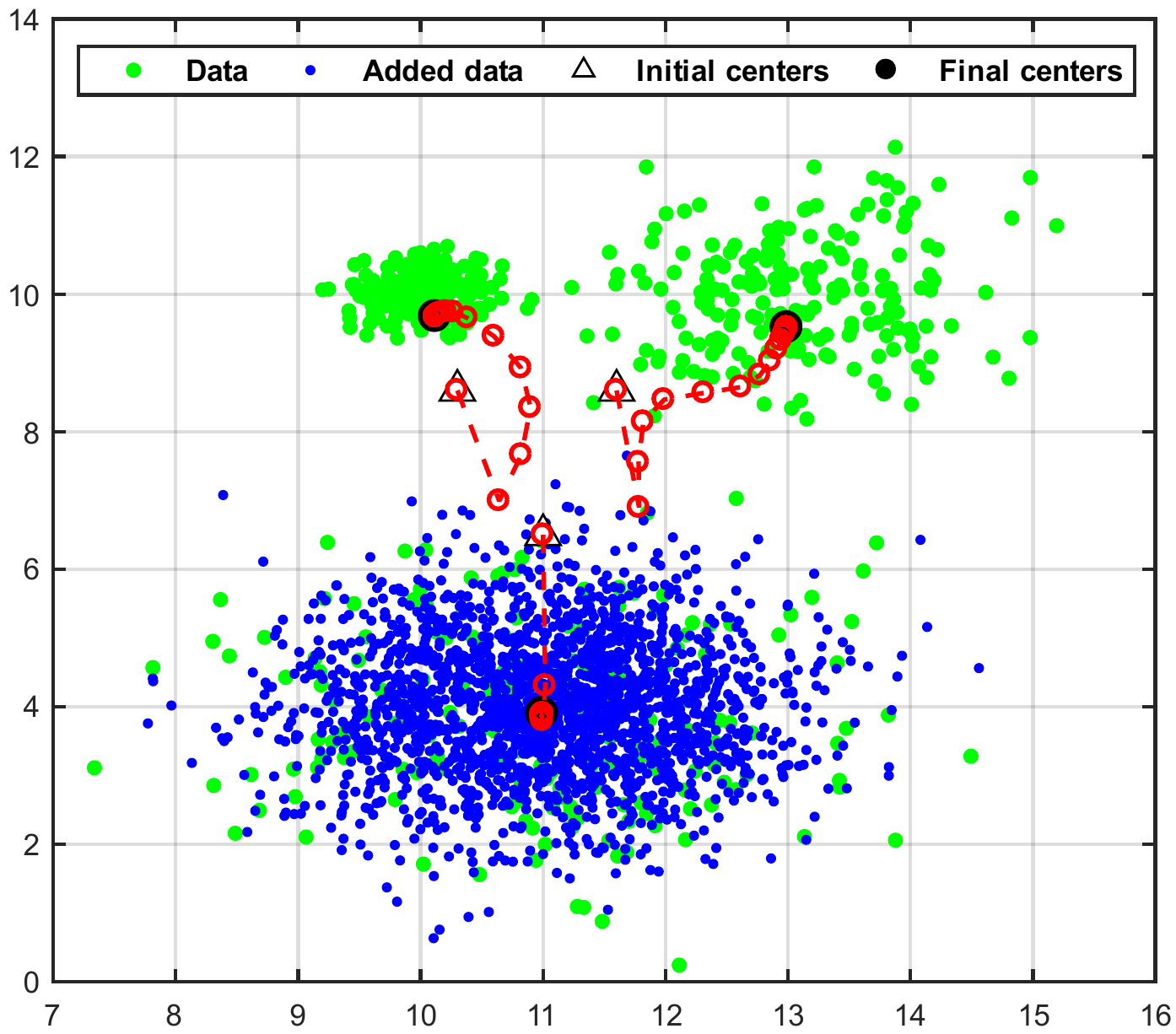}\label{fig6b}}
\subfloat[MSFCM on D2]{\includegraphics[width=0.25\textwidth]{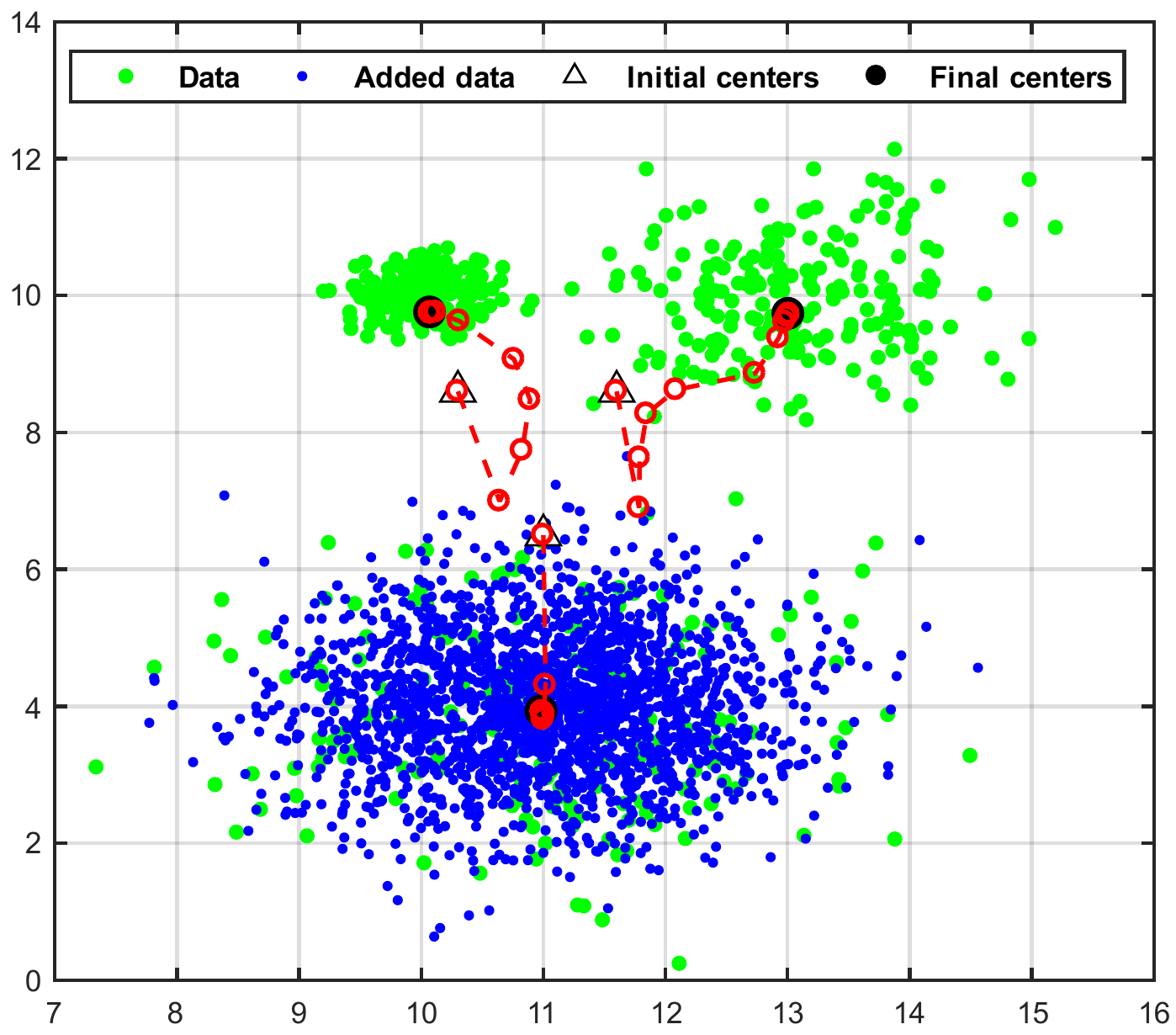}\label{fig6c}}
\subfloat[AMFCM on D2]{\includegraphics[width=0.25\textwidth]{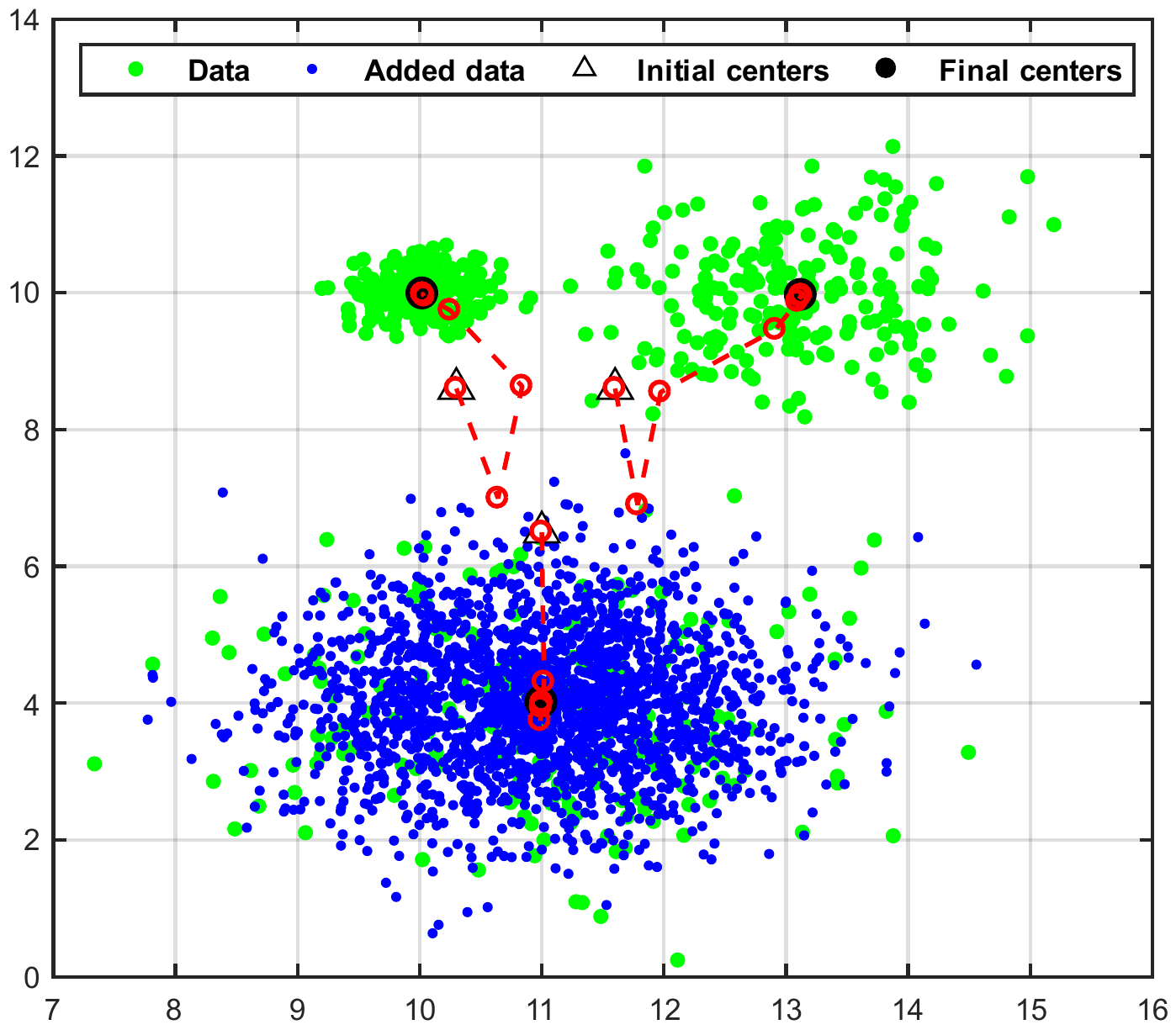}\label{fig6d}}
\caption{The convergence trajectories of FCM, MSFCM and AMFCM with same initializations on data sets D1 and D2. The
convergence trajectories of the three algorithms on D1 are put together, as shown in Fig. \ref{fig6a}.  The
convergence trajectories of the three algorithms on D2 are as shown in Fig. \ref{fig6b}, \ref{fig6c} and \ref{fig6d}, respectively. On D1, FCM, MSFCM and AMFCM are converged with 12 ($t_{\textrm{A}}$=2; $t_{\textrm{B}}$=10), 10 ($t_{\textrm{A}}$=2; $t_{\textrm{B}}$=8), and 8 ($t_{\textrm{A}}$=2; $t_{\textrm{B}}$=6) iterations and take 0.1294, 0.0781, and 0.054 seconds, respectively. On D2, FCM, MSFCM and AMFCM are converged with 30 ($t_{\textrm{A}}$=8; $t_{\textrm{B}}$=22), 18 ($t_{\textrm{A}}$=6; $t_{\textrm{B}}$=12), and 10 ($t_{\textrm{A}}$=3; $t_{\textrm{B}}$=7) iterations and take 0.2830, 0.1094, and 0.066 seconds, respectively.}\label{fig6}
\end{figure*}

First of all, the convergence trajectories of FCM, MSFCM, and AMFCM on D1 are similar, where the total contributions of the samples in the update of their non-affinity centers are small and not enough to observe. Therefore, the convergence trajectories of the three algorithms on D1 are put together, as shown in Fig. \ref{fig6a}. The experimental results on D1 show that AMFCM performs best, as shown in Fig. \ref{fig6}. From the convergence trajectories on D1, it is observed that although $t_{\textrm{A}}$=2 for FCM, MSFCM and AMFCM, $t_{\textrm{B}}$ of AMFCM is the least with only 6 iterations. On D2, the contributions of the new added samples in the update of their non-affinity centers mislead the displacement direction of the centers by the membership grades for FCM and MSFCM. Therefore, stages [A] and [B] of FCM and MSFCM  have been extended, where $t_{\textrm{A}}$=8 and $t_{\textrm{B}}$=22 for FCM; $t_{\textrm{A}}$=6 and $t_{\textrm{B}}$=12 for MSFCM. However, AMFCM completely eliminates the misleading of the total contributions of the newly added samples in the update of their non-affinity centers, which achieves better performance. From the convergence trajectory of AMFCM on D2, $t_{\textrm{A}}$=3 and $t_{\textrm{B}}$=7 for AMFCM.

Secondly, it is observed that the efficiency of MSFCM is higher than that of FCM. But the performance of MSFCM is limited. As analyzed in Section \ref{subsec3-1}, the previous affinity filtering \eqref{eq_4} of MSFCM fails to obtain the complete set of the non-affinity centers of each sample on D1 and D2, where $|\mathcal{P}|=1$ for some samples in this stage. Therefore, MSFCM does not always maintain high efficiency, because the membership scaling \eqref{eq_4} is invalid. However, AMFCM makes up for this shortcoming.

AMFCM accelerates the whole convergence process of FCM under same initializations. Both stages [A] and [B] have been accelerated. Especially, AMFCM can save 67\% of the number of the iteration of FCM on D2. As the previous complexity analysis, the running time of AMFCM is decreased. Thus, it can be concluded that the new affinity filtering scheme \eqref{eq6} is implemented with high efficiency, and the new membership scaling scheme \eqref{eq_modified u} is outstanding in terms of efficiency and clustering quality.
\subsection{Experiments on Real-World Data Sets}
In this subsection, some experiments are done to verify the clustering efficiency and performance of AMFCM  on real-world data sets.
\subsubsection{Acceleration and Performance of AMFCM}
To verify the acceleration of AMFCM in stages [A] and [B] on real-world data sets, the experiments in Fig. \ref{fig1} are redone by AMFCM with the same settings, and the corresponding hard-objective of AMFCM are also displayed, which is to illustrate the clustering characteristics of AMFCM,  as shown in Fig. \ref{fig7}.
\begin{figure}[htp!]
\centering
    \label{fig3a}\includegraphics[width=0.49\textwidth]{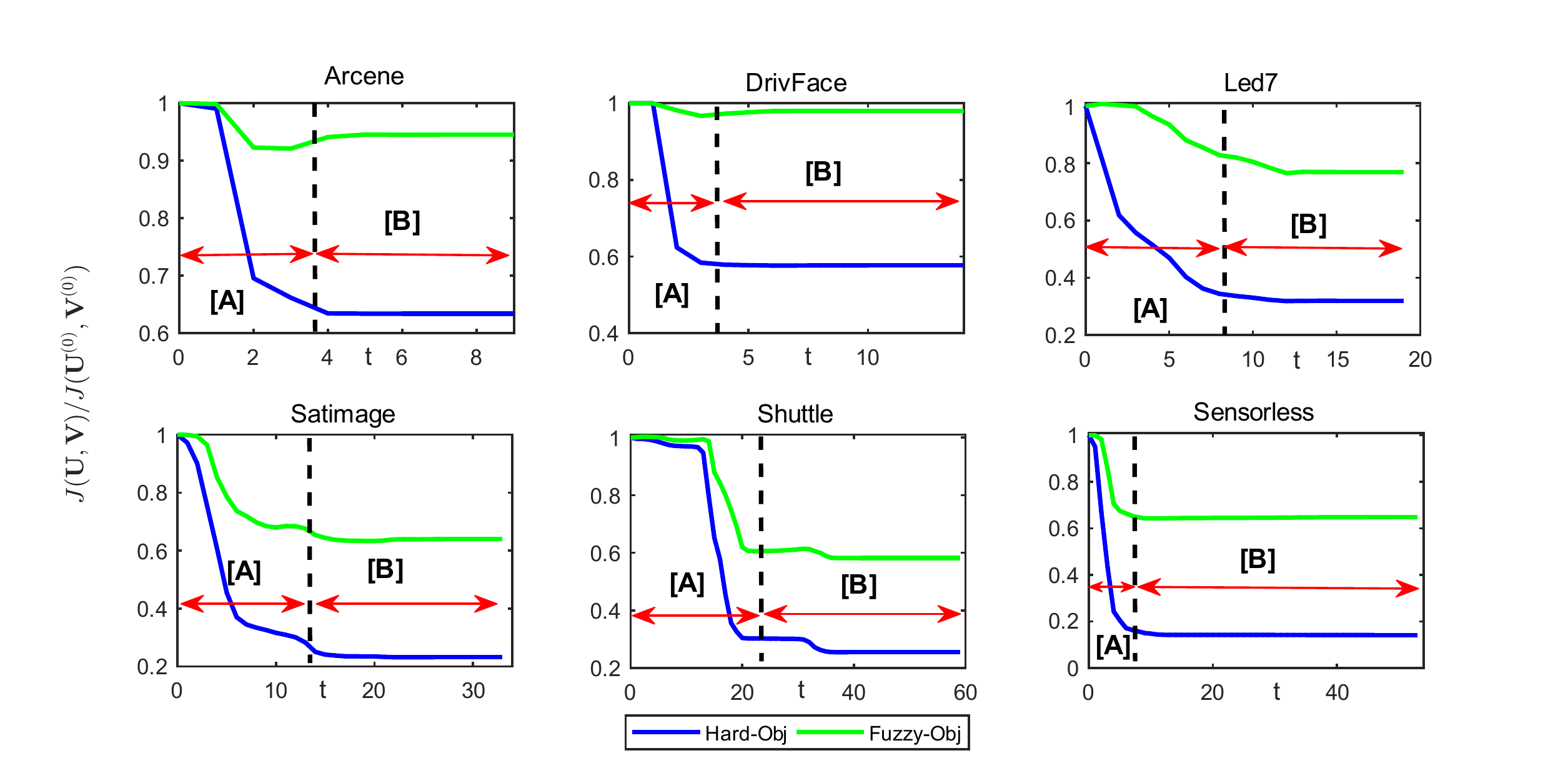}~~~~~~
\caption{Plots of $\frac{J_{\textbf{Fuzzy}}(\U^{(t)}, \V^{(t)})}{J_{\textbf{Fuzzy}}(\U^{(0)}, \V^{(0)})}$ and $\frac{J_{\textbf{Hard}}(\U^{(t)}, \V^{(t)})}{J_{\textbf{Hard}}(\U^{(0)}, \V^{(0)})}$ for iteration $t$ on six real-world data sets with AMFCM. The initializations is selected randomly for each data set. The plots clearly show that the clustering process of AMFCM can be divided into stages [A] and [B], where [A] represents the early stage and [B] represents the  mid-to-late stage.}\label{fig7}
\end{figure}

Similar to FCM, the fuzzy-objective and the corresponding hard-objective of AMFCM also can be divided into stages [A] and [B]. However, compared with Fig. \ref{fig1}, the number of the iteration of AMFCM is much lower than that of FCM, which saves at least 76$\%$ of the total rounds of the iteration on these six real-world data sets. Meanwhile, stages [A] and [B] of the convergence process of AMFCM are terminated earlier than the corresponding FCM, which is attributed to the new affinity filtering and membership scaling schemes, as shown in Fig. \ref{fig7}.

Recently, Nie \emph{et al.} \cite{Coordinate2021Nie} mentioned that bad local minimum makes the objective value not small enough, which limits the performance of the algorithms. According to this, the comparison results of the fuzzy and corresponding hard objective value of FCM and AMFCM on these six real-world data sets are displayed in TABLE \ref{table2}.
\begin{table}[h]
\centering
\caption{Comparison for the fuzzy and hard objective value with FCM and AMFCM. The values are averaged over 10 trials with random initializations. The best results are shown in boldface.}
\label{table2}
\begin{tabular}{r|rr|rr} \toprule
\multicolumn{1}{r|}{\multirow{2}{*}{Data sets}}
&\multicolumn{2}{c|}{\multirow{1}{*}{Fuzzy Objective Value}}
&\multicolumn{2}{c}{\multirow{1}{*}{Hard Objective Value}}
\\
\vspace{1.5mm}
&\multicolumn{1}{c}{\multirow{2}{*}{FCM}}
&\multicolumn{1}{c|}{\multirow{2}{*}{AMFCM}}
&\multicolumn{1}{c}{\multirow{2}{*}{FCM}}
&\multicolumn{1}{c}{\multirow{2}{*}{AMFCM}}
\\
\midrule
Arcene& \textbf{4.297$\texttt{E}$+04} &4.439$\texttt{E}$+04 & 6.635$\texttt{E}$+04 &\textbf{6.261$\texttt{E}$+04}\\
DrivFace&\textbf{6.079$\texttt{E}$+04} &6.362$\texttt{E}$+04 &1.315$\texttt{E}$+05 &\textbf{1.101$\texttt{E}$+05}\\
Led7&4.302$\texttt{E}$+02 &\textbf{3.709$\texttt{E}$+02}&3.468$\texttt{E}$+03 &\textbf{1.468$\texttt{E}$+03}\\
Satimage&\textbf{5.988$\texttt{E}$+02} &6.474$\texttt{E}$+02&1.764$\texttt{E}$+03 &\textbf{1.389$\texttt{E}$+03}\\
Shuttle&1.383$\texttt{E}$+02 & \textbf{1.326$\texttt{E}$+02}&2.925$\texttt{E}$+02& \textbf{2.588$\texttt{E}$+02}  \\
Sensorless&\textbf{1.869$\texttt{E}$+03} &1.919$\texttt{E}$+03&6.243$\texttt{E}$+03 &\textbf{4.558$\texttt{E}$+03}\\
\bottomrule
\end{tabular}
\end{table}

In TABLE \ref{table2}, the fuzzy objective value cannot achieve a small value because some values derived by the ordinary optimization theory are modified by AMFCM. The fuzzy objective is sacrificed for the efficiency of the algorithm. However, AMFCM can increase the membership of each sample to its nearest center through Eq. \ref{eq_modified u}, so that the corresponding hard objective of AMFCM is continuously optimized. Therefore, the corresponding hard objective value of AMFCM is better than that of FCM. Furthermore, AMFCM not only greatly improves the efficiency of FCM, but also maintains better clustering performance.

In order to display the performance of AMFCM more comprehensively, AMFCM is compared with the seven chosen clustering algorithms on the above eight evaluation metrics. Moreover, all real-world data sets, which are selected from UCI Machine Learning Repository\footnote{\url{https://archive.ics.uci.edu/ml/index.php}}, are clustered by the chosen clustering algorithms. The detailed information on the data sets is given in each table title, where $n$ is the number of training size, $p$ is the dimensionality of samples, and $c$ is the given number of clusters. The values are averaged over 10 trials with random initializations and the standard deviations are given after the means (linked with $\pm$), and the best results are shown in boldface. In addition, the corresponding \textbf{Iteration}  and \textbf{Time} of the different algorithms on ten real-world data sets are shown in Fig. \ref{fig9}.
\begin{figure*}[htp!]
\centering
\includegraphics[width=0.45\textwidth]{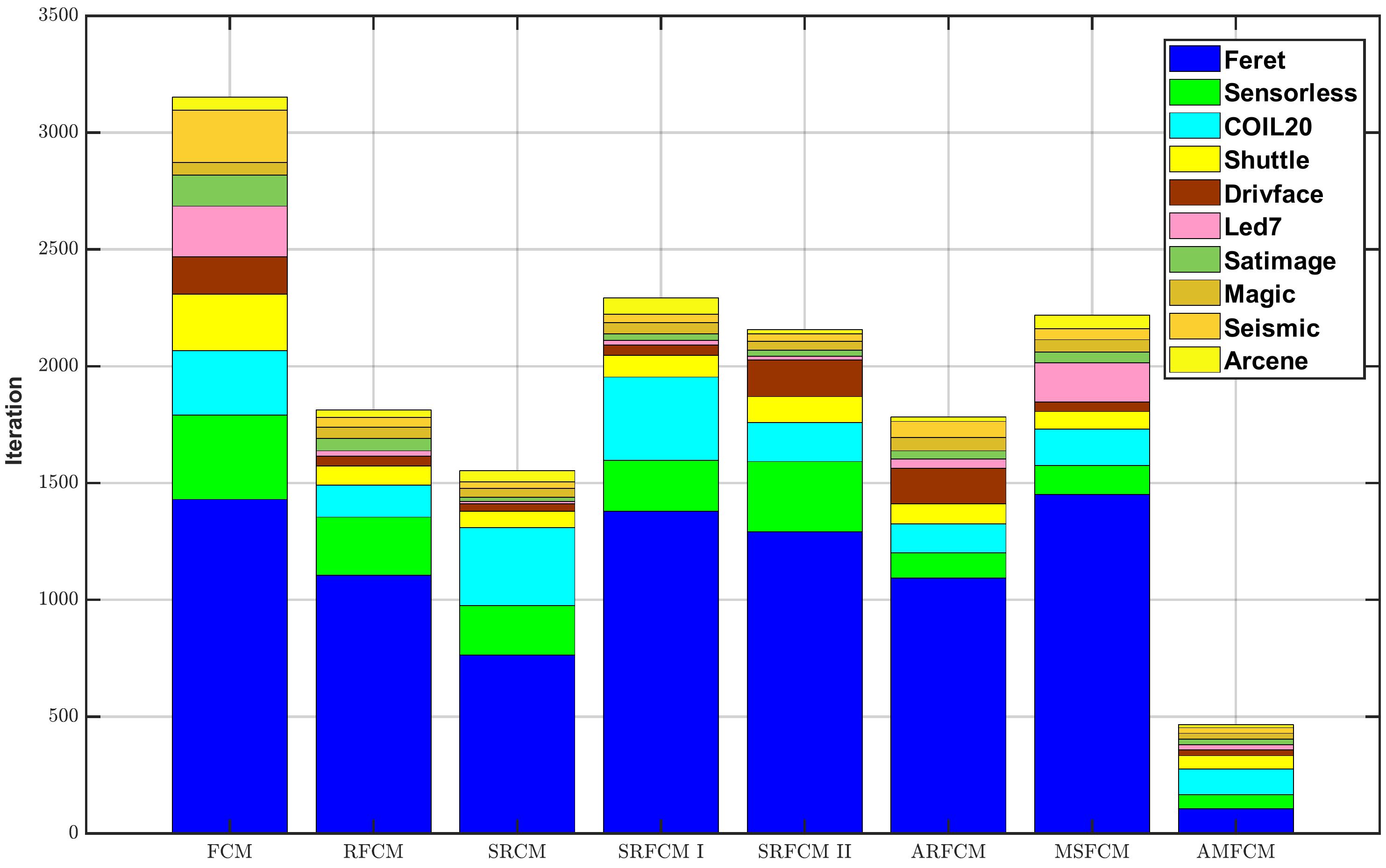}~~~~~~~~~~
\includegraphics[width=0.45\textwidth]{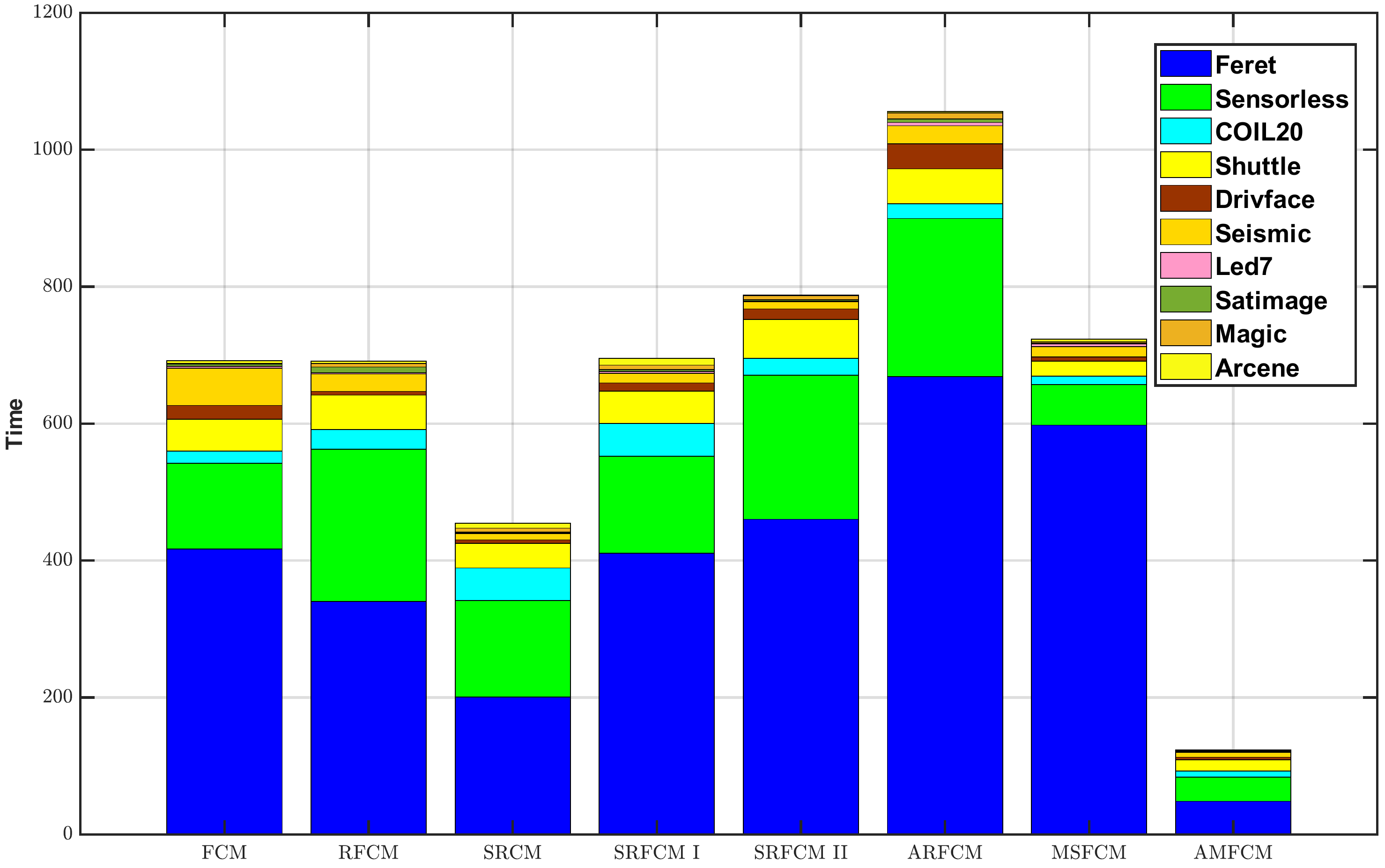}
\caption{Plot of the corresponding \textbf{Iteration}  and \textbf{Time} of the different algorithms on ten real-world data sets.}\label{fig9}
\end{figure*}

\begin{table*}[htp!]
\caption{Experimental results on ten real-world data sets for different algorithms. The values are averaged over 10 trials with random initializations. The standard deviations are given after the means (linked with $\pm$), and the best results are shown in boldface.}
\label{table3}
\centering
\resizebox{\textwidth}{!}{
\begin{threeparttable}
\begin{tabular}{l|r|rrrrrrrr} \toprule
\multicolumn{1}{l|}{\multirow{1}{*}{Data sets}}
&\multicolumn{1}{c|}{\multirow{1}{*}{Metrics \tnote{1}}}
&\multicolumn{1}{c}{\multirow{1}{*}{\textbf{FCM}}}
&\multicolumn{1}{c}{\multirow{1}{*}{\textbf{RFCM}}}
&\multicolumn{1}{c}{\multirow{1}{*}{\textbf{SRCM}}}
&\multicolumn{1}{c}{\multirow{1}{*}{\textbf{SRFCM $\textrm{\uppercase\expandafter{\romannumeral1}}$}}}
&\multicolumn{1}{c}{\multirow{1}{*}{\textbf{SRFCM $\textrm{\uppercase\expandafter{\romannumeral2}}$}}}
&\multicolumn{1}{c}{\multirow{1}{*}{\textbf{ARFCM}}}
&\multicolumn{1}{c}{\multirow{1}{*}{\textbf{MSFCM}}}
&\multicolumn{1}{c}{\multirow{1}{*}{\textbf{AMFCM}}}
\\
\midrule
&PC $\uparrow$&0.655$\pm$0.001&0.671$\pm$0.001&0.663$\pm$0.023&0.698$\pm$0.001&0.652$\pm$0.001&\textbf{0.710}$\pm$\textbf{0.001}&0.655$\pm$0.001&0.658$\pm$0.001\\
\textbf{Arcene}&DBI $\downarrow$&1.015$\pm$0.001&1.031$\pm$0.001&1.042$\pm$0.167&0.857$\pm$0.001&1.124$\pm$0.002&1.000$\pm$0.001&1.015$\pm$0.001&\textbf{0.831}$\pm$\textbf{0.001}\\
$n$=200&XB $\downarrow$&0.379$\pm$0.001&0.357$\pm$0.001&0.386$\pm$0.001&0.337$\pm$0.001&0.405$\pm$0.001&0.335$\pm$0.001&0.379$\pm$0.001&\textbf{0.316}$\pm$\textbf{0.001}\\
$p$=10000&$F^{*}$ $\uparrow$&0.583$\pm$0.001&0.641$\pm$0.001&0.633$\pm$0.004&0.586$\pm$0.001&0.643$\pm$0.001&0.649$\pm$0.001&0.583$\pm$0.001&\textbf{0.654}$\pm$\textbf{0.001}\\

$c$=2&ARI $\uparrow$&0.027$\pm$0.001&0.089$\pm$0.001&0.074$\pm$0.003&0.030$\pm$0.001&0.090$\pm$0.001&\textbf{0.091}$\pm$\textbf{0.001}&0.027$\pm$0.001&\textbf{0.091}$\pm$\textbf{0.001}\\

&NMI $\uparrow$&0.018$\pm$0.001&0.091$\pm$0.001&0.066$\pm$0.001&0.020$\pm$0.001&0.086$\pm$0.001&0.085$\pm$0.001&0.018$\pm$0.001&\textbf{0.087}$\pm$\textbf{0.001}\\

\midrule
&PC $\uparrow$&0.420$\pm$0.001&0.489$\pm$0.001&0.479$\pm$0.010&0.487$\pm$0.006&0.477$\pm$0.009&0.445$\pm$0.001&0.461$\pm$0.001&\textbf{0.490}$\pm$\textbf{0.001}\\

\textbf{DrivFace}&DBI $\downarrow$&15.7$\pm$0.1&2.641$\pm$0.011&3.638$\pm$1.375&2.258$\pm$.0461&4.964$\pm$1.809&2.666$\pm$0.810&2.677$\pm$0.001&\textbf{1.644}$\pm$\textbf{0.001}\\

$n$=606&XB $\downarrow$&5.354$\pm$0.001&0.897$\pm$0.004&1.685$\pm$0.425&1.187$\pm$0.267&1.732$\pm$0.601&0.936$\pm$0.072&0.902$\pm$0.001&\textbf{0.567}$\pm$\textbf{0.001}\\
$p$=6400&$F^{*}$ $\uparrow$&0.558$\pm$0.001&0.576$\pm$0.005&0.541$\pm$0.001&0.576$\pm$0.004&0.571$\pm$0.003&0.566$\pm$0.001&0.587$\pm$0.001&\textbf{0.597}$\pm$\textbf{0.001}\\

$c$=3&ARI $\uparrow$&0.016$\pm$0.001&0.021$\pm$0.001&0.008$\pm$0.001&0.006$\pm$0.001&0.020$\pm$0.003&0.019$\pm$0.001&0.019$\pm$0.001&\textbf{0.024}$\pm$\textbf{0.001}\\

&NMI $\uparrow$&0.053$\pm$0.001&0.054$\pm$0.001&0.030$\pm$0.002&0.035$\pm$0.007&0.045$\pm$0.002&0.054$\pm$0.001&0.053$\pm$0.001&\textbf{0.057}$\pm$\textbf{0.001}\\

\midrule
&PC $\uparrow$&\textbf{0.007}$\pm$\textbf{0.001}&\textbf{0.007}$\pm$\textbf{0.001}&0.006$\pm$0.003&\textbf{0.007}$\pm$\textbf{0.001}&0.007$\pm$0.002&0.005$\pm$0.001&\textbf{0.007}$\pm$\textbf{0.001}&\textbf{0.007}$\pm$\textbf{0.001}\\

\textbf{Feret}&DBI $\downarrow$&2.729$\pm$0.106&2.630$\pm$0.186&2.832$\pm$0.389&2.311$\pm$0.834&7.267$\pm$3.920&1.963$\pm$0.345&1.729$\pm$0.259&\textbf{0.927}$\pm$0.122\\

$n$=1400&XB $\downarrow$&0.127$\pm$0.001&0.292$\pm$0.001&0.156$\pm$0.041&0.062$\pm$0.010&5.984$\pm$4.184&0.056$\pm$0.001&0.088$\pm$0.001&\textbf{0.053}$\pm$0.009\\
$p$=1600&$F^{*}$ $\uparrow$&0.133$\pm$0.005&0.119$\pm$0.005&0.144$\pm$0.026&0.146$\pm$0.012&0.143$\pm$0.022&0.150$\pm$0.025&0.119$\pm$0.030&\textbf{0.153}$\pm$\textbf{0.002}\\

$c$=200&ARI $\uparrow$&0.016$\pm$0.034&0.016$\pm$0.020&0.022$\pm$0.009&0.024$\pm$0.011&0.019$\pm$0.005&0.021$\pm$0.006&0.021$\pm$0.003&\textbf{0.025}$\pm$\textbf{0.003}\\

&NMI $\uparrow$&0.415$\pm$0.009&0.382$\pm$0.015&0.415$\pm$0.028&0.419$\pm$0.021&0.402$\pm$0.019&0.425$\pm$0.017&0.419$\pm$0.058&\textbf{0.478}$\pm$0.034\\

\midrule
&PC $\uparrow$&0.055$\pm$0.001&0.056$\pm$0.006&0.091$\pm$0.009&0.097$\pm$0.009&0.102$\pm$0.009&0.055$\pm$0.001&0.058$\pm$0.005&\textbf{0.108}$\pm$0.018\\

\textbf{COIL20}&DBI $\downarrow$&1.776$\pm$0.225&2.961$\pm$0.154&42.3$\pm$21.2&42.3$\pm$1.7&37.9$\pm$15.6&1.276$\pm$0.074&1.787$\pm$0.001&\textbf{0.847}$\pm$0.012\\

$n$=1440&XB $\downarrow$&1.077$\pm$0.001&1.099$\pm$0.014&9.7$\pm$3.7&10.1$\pm$3.0&14.2$\pm$6.2&0.297$\pm$0.001&0.970$\pm$0.003&\textbf{0.127}$\pm$0.005\\

$p$=1024&$F^{*}$ $\uparrow$&0.242$\pm$0.028&0.228$\pm$0.001&0.282$\pm$0.041&0.401$\pm$0.032&0.397$\pm$0.010&0.271$\pm$0.001&0.260$\pm$0.015&\textbf{0.443}$\pm$0.060\\

$c$=20&ARI $\uparrow$&0.110$\pm$0.036&0.133$\pm$0.001&0.248$\pm$0.058&0.237$\pm$0.026&0.254$\pm$0.017&0.279$\pm$0.002&0.110$\pm$0.023&\textbf{0.286}$\pm$0.068\\

&NMI $\uparrow$&0.299$\pm$0.042&0.372$\pm$0.001&0.339$\pm$0.045&0.385$\pm$0.020&0.317$\pm$0.004&0.301$\pm$0.003&0.374$\pm$0.017&\textbf{0.574}$\pm$0.059\\

\midrule
&PC $\uparrow$&0.229$\pm$0.027&0.521$\pm$0.021&0.281$\pm$0.043&0.421$\pm$0.064&0.371$\pm$0.070&\textbf{0.593}$\pm$0.026&0.351$\pm$0.221&0.509$\pm$0.051\\

\textbf{Led7}&DBI $\downarrow$&0.986$\pm$0.131&1.145$\pm$0.100&2.852$\pm$0.575&1.454$\pm$0.229&1.807$\pm$0.229&1.160$\pm$0.206&0.983$\pm$0.117&\textbf{0.857}$\pm$0.105\\

$n$=3200&XB $\downarrow$&0.178$\pm$0.011&0.132$\pm$0.008&0.191$\pm$0.015&0.165$\pm$0.022&0.223$\pm$0.042&0.144$\pm$0.027&0.164$\pm$0.032&\textbf{0.122}$\pm$\textbf{0.002}\\

$p$=7&$F^{*}$ $\uparrow$&0.424$\pm$0.001&0.614$\pm$0.049&0.591$\pm$0.055&0.618$\pm$0.111&0.586$\pm$0.018&0.694$\pm$0.025&0.487$\pm$0.165&\textbf{0.731}$\pm$0.007\\

$c$=10&ARI $\uparrow$&0.232$\pm$0.001&0.415$\pm$0.043&0.376$\pm$0.063&0.405$\pm$0.101&0.356$\pm$0.034&0.438$\pm$0.021&0.436$\pm$0.152&\textbf{0.497}$\pm$0.007\\

&NMI $\uparrow$&0.366$\pm$0.001&0.498$\pm$0.038&0.465$\pm$0.052&0.494$\pm$0.077&0.466$\pm$0.033&0.510$\pm$0.016&0.473$\pm$0.104&\textbf{0.563}$\pm$0.019\\

\midrule
&PC $\uparrow$&0.390$\pm$0.001&0.432$\pm$0.029&0.315$\pm$0.045&0.351$\pm$0.010&0.341$\pm$0.038&0.408$\pm$0.019&0.448$\pm$0.028&\textbf{0.476}$\pm$\textbf{0.001}\\

\textbf{Satimage}&DBI $\downarrow$&4.880$\pm$0.001&2.464$\pm$0.862&39.4$\pm$12.4&8.906$\pm$3.432&28.8$\pm$10.9&3.681$\pm$1.310&2.261$\pm$1.048&\textbf{0.908}$\pm$\textbf{0.001}\\

$n$=6435&XB $\downarrow$&3.478$\pm$0.001&1.112$\pm$0.023&25.2$\pm$3.1&1.612$\pm$0.419&8.032$\pm$2.187&1.180$\pm$0.186&0.493$\pm$0.036&\textbf{0.455}$\pm$\textbf{0.001}\\

$p$=36&$F^{*}$ $\uparrow$&0.553$\pm$0.001&0.593$\pm$0.029&0.631$\pm$0.049&0.638$\pm$0.062&0.612$\pm$0.080&0.635$\pm$0.009&0.638$\pm$0.013&\textbf{0.659}$\pm$\textbf{0.001}\\

$c$=6&ARI $\uparrow$&0.292$\pm$0.001&0.317$\pm$0.027&0.348$\pm$0.051&0.358$\pm$0.071&0.337$\pm$0.018&0.389$\pm$0.005&0.406$\pm$0.019&\textbf{0.443}$\pm$\textbf{0.001}\\

&NMI $\uparrow$&0.450$\pm$0.001&0.461$\pm$0.023&0.457$\pm$0.029&0.457$\pm$0.034&0.446$\pm$0.062&0.471$\pm$0.007&0.493$\pm$0.006&\textbf{0.515}$\pm$\textbf{0.001}\\

\midrule
&PC $\uparrow$&0.656$\pm$0.001&0.720$\pm$0.001&0.733$\pm$0.001&0.717$\pm$0.001&0.702$\pm$0.001&0.679$\pm$0.036&0.656$\pm$0.001&\textbf{0.729}$\pm$\textbf{0.001}\\

\textbf{Magic}&DBI $\downarrow$&1.886$\pm$0.001&1.549$\pm$0.001&1.542$\pm$0.001&1.539$\pm$0.001&1.539$\pm$0.001&1.490$\pm$0.005&1.886$\pm$0.001&\textbf{1.050}$\pm$\textbf{0.001}\\

$n$=19020&XB $\downarrow$&0.545$\pm$0.001&0.390$\pm$0.001&0.373$\pm$0.001&0.381$\pm$0.001&0.372$\pm$0.001&0.438$\pm$0.010&0.545$\pm$0.001&\textbf{0.266}$\pm$\textbf{0.001}\\
$p$=10&$F^{*}$ $\uparrow$&0.582$\pm$0.001&0.627$\pm$0.001&0.622$\pm$0.001&0.633$\pm$0.001&0.612$\pm$0.001&0.591$\pm$0.003&0.582$\pm$0.001&\textbf{0.641}$\pm$\textbf{0.001}\\

$c$=2&ARI $\uparrow$&0.007$\pm$0.001&0.013$\pm$0.001&0.018$\pm$0.001&0.014$\pm$0.001&0.018$\pm$0.001&0.008$\pm$0.001&0.007$\pm$0.001&\textbf{0.020}$\pm$\textbf{0.001}\\

&NMI $\uparrow$&0.020$\pm$0.001&0.043$\pm$0.001&0.051$\pm$0.001&0.046$\pm$0.001&0.053$\pm$0.001&0.053$\pm$0.001&0.020$\pm$0.001&\textbf{0.057}$\pm$\textbf{0.001}\\

\midrule
&PC $\uparrow$&0.362$\pm$0.001&0.447$\pm$0.035&0.353$\pm$0.006&0.398$\pm$0.017&0.349$\pm$0.032&0.330$\pm$0.032&0.409$\pm$0.006&\textbf{0.513}$\pm$0.049\\

\textbf{Shuttle}&DBI $\downarrow$&290.5$\pm$0.8&157.8$\pm$27.4&307.7$\pm$124.3&230.4$\pm$30.2&244.6$\pm$27.1&287.9$\pm$63.9&153.6$\pm$19.4&\textbf{52.2}$\pm$33.7\\

$n$=58000&XB $\downarrow$&98.7$\pm$1.6&9.7$\pm$0.4&13.5$\pm$1.3&13.0$\pm$3.9&12.2$\pm$0.4&65.1$\pm$8.41&15.4$\pm$0.1&\textbf{7.8}$\pm$3.6\\

$p$=9&$F^{*}$ $\uparrow$&0.504$\pm$0.001&0.593$\pm$0.010&0.546$\pm$0.061&0.578$\pm$0.062&0.571$\pm$0.010&0.460$\pm$0.056&0.512$\pm$0.043&\textbf{0.667}$\pm$0.071\\

$c$=7&ARI $\uparrow$&0.114$\pm$0.001&0.157$\pm$0.026&0.117$\pm$0.054&0.157$\pm$0.073&0.149$\pm$0.007&0.085$\pm$0.022&0.153$\pm$0.052&\textbf{0.190}$\pm$0.062\\

&NMI $\uparrow$&0.218$\pm$0.001&0.257$\pm$0.027&0.201$\pm$0.066&0.226$\pm$0.023&0.206$\pm$0.030&0.212$\pm$0.036&0.218$\pm$0.060&\textbf{0.260}$\pm$0.016\\

\midrule
&PC $\uparrow$&0.263$\pm$0.016&0.251$\pm$0.023&0.202$\pm$0.026&0.191$\pm$0.024&0.112$\pm$0.004&0.241$\pm$0.016&0.295$\pm$0.023&\textbf{0.309}$\pm$\textbf{0.004}\\

\textbf{Sensorless}&DBI $\downarrow$&3.206$\pm$0.091&5.304$\pm$0.951&156.7$\pm$32.4&31.1$\pm$28.6&21.6$\pm$9.4&5.968$\pm$3.370&1.869$\pm$0.507&\textbf{0.814}$\pm$\textbf{0.054}\\

$n$=58509&XB $\downarrow$&1.494$\pm$0.067&3.051$\pm$0.667&35.5$\pm$14.7&25.2$\pm$14.5&15.9$\pm$8.6&3.906$\pm$1.513&1.195$\pm$0.635&\textbf{0.326}$\pm$\textbf{0.052}\\

$p$=48&$F^{*}$ $\uparrow$&0.307$\pm$0.014&0.282$\pm$0.021&0.281$\pm$0.017&0.271$\pm$0.020&0.288$\pm$0.023&0.275$\pm$0.028&0.311$\pm$0.023&\textbf{0.325}$\pm$\textbf{0.005}\\

$c$=11&ARI $\uparrow$&0.142$\pm$0.007&0.122$\pm$0.014&0.106$\pm$0.013&0.009$\pm$0.001&0.103$\pm$0.018&0.100$\pm$0.018&0.143$\pm$0.023&\textbf{0.147}$\pm$0.001\\

&NMI $\uparrow$&0.306$\pm$0.006&0.278$\pm$0.031&0.265$\pm$0.012&0.243$\pm$0.025&0.276$\pm$0.039&0.238$\pm$0.027&0.325$\pm$0.020&\textbf{0.339}$\pm$0.006\\

\midrule
&PC $\uparrow$&0.420$\pm$0.001&0.521$\pm$0.001&0.522$\pm$0.033&0.512$\pm$0.053&0.514$\pm$0.028&\textbf{0.544}$\pm$\textbf{0.001}&0.447$\pm$0.001&0.503$\pm$0.001\\

\textbf{Seismic}&DBI $\downarrow$&10.9$\pm$0.1&2.834$\pm$0.001&3.055$\pm$1.179&2.569$\pm$0.490&3.186$\pm$1.057&\textbf{2.008}$\pm$\textbf{0.001}&7.034$\pm$0.001&3.364$\pm$0.005\\

$n$=78823&XB $\downarrow$&2.220$\pm$0.001&0.566$\pm$0.001&0.724$\pm$0.356&0.474$\pm$0.027&0.734$\pm$0.336&\textbf{0.428}$\pm$\textbf{0.001}&1.412$\pm$0.001&0.680$\pm$0.001\\

$p$=30&$F^{*}$ $\uparrow$&0.448$\pm$0.001&0.460$\pm$0.001&0.493$\pm$0.006&0.492$\pm$0.011&\textbf{0.503}$\pm$0.006&0.491$\pm$0.001&0.451$\pm$0.001&0.471$\pm$0.001\\

$c$=3&ARI $\uparrow$&0.038$\pm$0.001&0.040$\pm$0.001&\textbf{0.069}$\pm$0.001&0.046$\pm$0.004&0.055$\pm$0.022&0.063$\pm$0.001&0.038$\pm$0.001&0.045$\pm$0.001\\

&NMI $\uparrow$&0.043$\pm$0.001&0.045$\pm$0.001&0.066$\pm$0.003&0.065$\pm$0.005&\textbf{0.081}$\pm$0.024&0.058$\pm$0.001&0.043$\pm$0.001&0.049$\pm$0.001\\

\bottomrule
\end{tabular}
\begin{tablenotes}
\item[1] The superscript '$\uparrow$' sign of evaluation metrics represents that the larger evaluation metrics, the better the clustering performance. The superscript '$\downarrow$' sign of evaluation metrics represents that the smaller evaluation metrics, the better the clustering performance.
\end{tablenotes}
\end{threeparttable}}
\end{table*}

From the experimental results in TABLE \ref{table3} and Fig. \ref{fig9}, the following conclusions are obtained.
\begin{itemize}
  \item Comparing the first and last columns of each data set, AMFCM has better performance than FCM on all data sets in terms of the above six evaluation metrics in TABLE \ref{table3}. Moreover, it is worth mentioning that the efficiency of AMFCM has been improved in the whole stages, which reduces the number of the iteration of FCM by 80$\%$ on average without significant computational cost in Fig. \ref{fig9}. Therefore, AMFCM has also achieved significant savings in running time. As shown in Fig. \ref{fig9}, the total \textbf{Iteration} and \textbf{Time} of AMFCM is much lower than that of other algorithms.
  \item According to \cite{Zhou2020A}, it is found that the cost of AMFCM and MSFCM are both $\mathcal{O}(3ncp)$ per iteration. For the experimental results of MSFCM in the penultimate column of each table, the acceleration of MSFCM on the two data sets failed because the affinity filtering\eqref{eq_4} fails to obtain the complete non-affinity information as analyzed in Section \ref{subsec3-1}, and the low efficiency of the membership scaling \eqref{eq_new u}, which will be explained in the next Subsection \ref{subsubsec5-3-2}. In the remaining eight data sets, although MSFCM is effective, the acceleration performance of AMFCM in the whole stages is better than that of MSFCM. Thus, AMFCM is a successful generalization to MSFCM.
  \item The remaining five algorithms sometimes have better clustering quality than AMFCM, which is due to the good parameters for the division of each cluster. In this case, the clustering efficiency of the algorithms is reduced, as shown in Fig. \ref{fig9}. However, searching the complete set of the non-affinity centers is a parameter-free and autonomous process for AMFCM. In summary, AMFCM is a good trade-off between  efficiency and quality.
\end{itemize}

According to the summary above, AMFCM can greatly improve clustering efficiency and quality on real-world data sets, which is based on the new affinity filtering \eqref{eq6} and membership scaling \eqref{eq_modified u} schemes. The acquisition and elimination of the redundant contributions of the samples in the update of the centers is the key to the success of AMFCM.

\subsubsection{Statistical Comparisons by Friedman Test}\label{subsubsec5-3-2}
In order to compare the  multiple algorithms systematically, the Friedman test \cite{Demiar2006Statistical} is applied to compare the clustering efficiency (\textbf{Time} and \textbf{Iteration}) and quality ($\textbf{F}^{*}$ and \textbf{ARI}) of the eight algorithms on the selected ten data sets. In detail, Friedman test at significance level $\alpha=0.05$ rejects the null hypothesis of equal performance, which leads to the use of post-hoc tests to find out which algorithms are actually different. Next, Nemenyi test is used to where the performance of two algorithms is significantly different if their average ranks over all datasets differ by at least one critical difference. The critical difference is defined as $\text{CD}=q_{\alpha} \sqrt{\frac{K(K+1)}{6N}}$, where critical values $q_{\alpha}$ are based on the Studentized range statistic divided by $\sqrt{2}$, $K$ is the number of the comparison algorithms, and $N$ is the number of the data sets. In this part, $\textbf{F}^{*}$ and \textbf{ARI} are selected to evaluate the clustering quality, and the remaining metric have the similar results. The critical difference (CD) diagrams, as shown in Fig. \ref{fig11}, are presented to analyze the significance between AMFCM and the comparison algorithms on the ten data sets with $\textbf{F}^{*}$, \textbf{ARI}, \textbf{Iteration} and \textbf{Time}, where the average rank of each algorithm is marked on the line and the axis. The axis is turned so that the lowest (best) ranks are to the right. Groups of algorithms that are not significantly different according to Nemenyi test are connected with a red line. The critical difference (CD = 3.3203 at 0.05 significance level) is also shown above the axis in each subfigure.

\begin{figure}[htp]
 \centering
 \subfloat[$\textbf{F}^{*}$]{\label{fig11a}\includegraphics[width=0.5\textwidth]{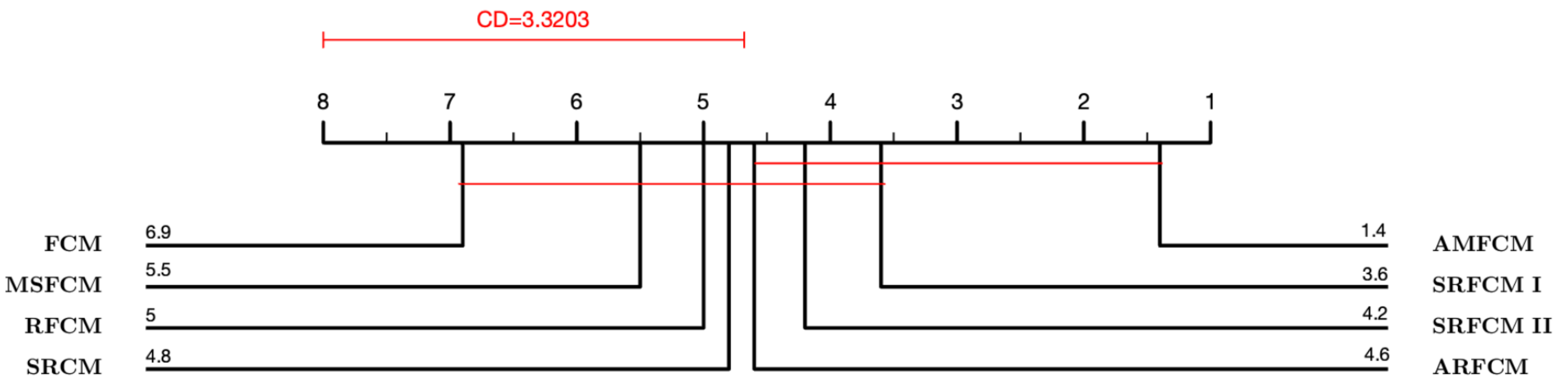}}\\
 \subfloat[$\textbf{ARI}$]{\label{fig11b}\includegraphics[width=0.5\textwidth]{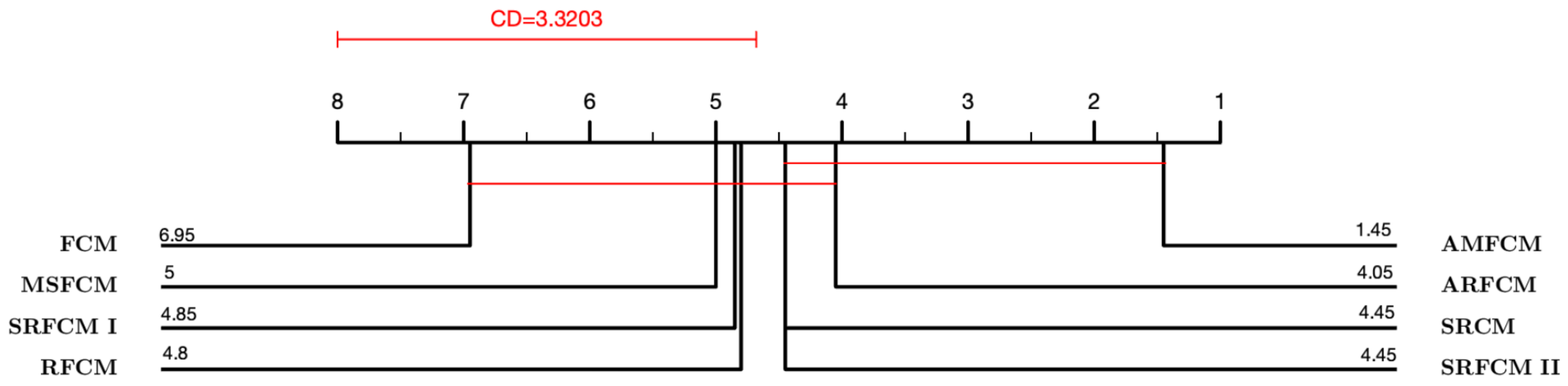}}\\
 \subfloat[$\textbf{Iteration}$]{\label{fig11c}\includegraphics[width=0.5\textwidth]{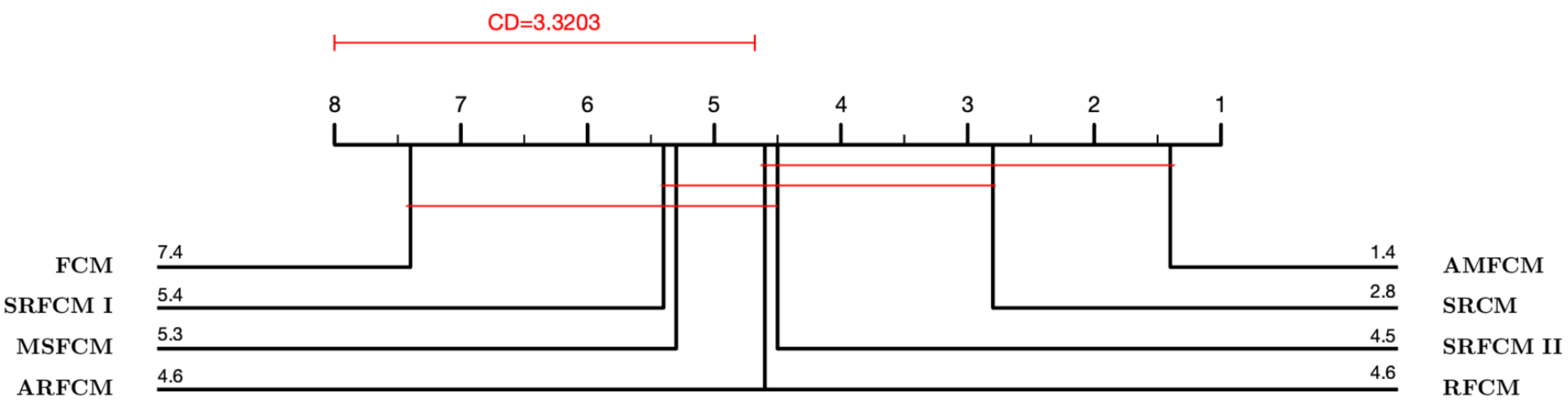}}\\
 \subfloat[$\textbf{Time}$]{\label{fig11d}\includegraphics[width=0.5\textwidth]{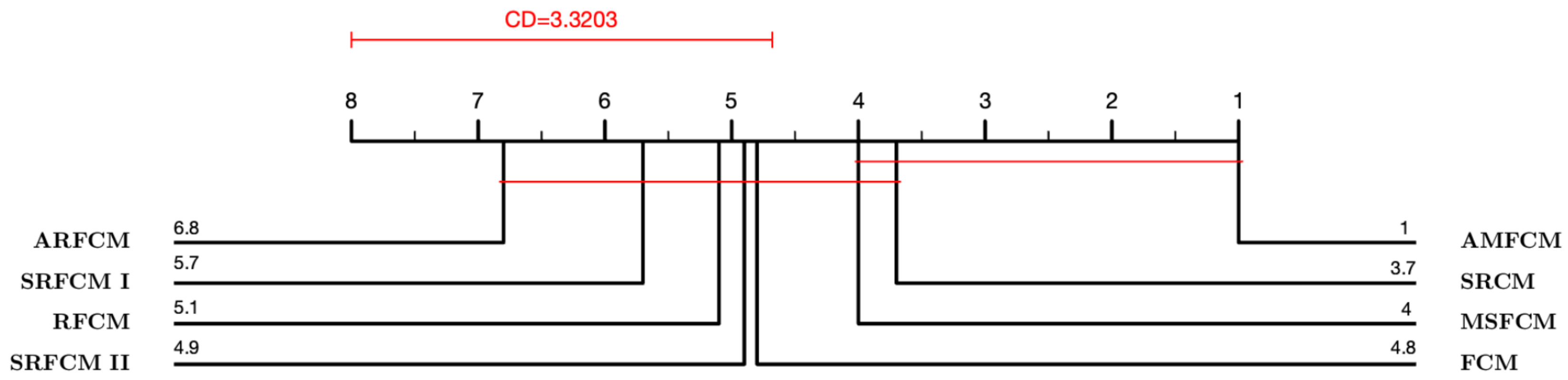}}
 \caption{CD diagrams of the eight comparison algorithms on the ten data sets with $\textbf{F}^{*}$, $\textbf{ARI}$, \textbf{Iteration}, and \textbf{Time}. It is clear that AMFCM statistically achieves a good trade-off between the clustering quality and efficiency.}\label{fig11}
 \end{figure}

According to the CD diagrams, first, in the clustering efficiency and quality, AMFCM achieves the statistically superior performance than that of FCM on the ten data sets. Second, from the Fig. \ref{fig11a} and \ref{fig11b}, AMFCM presents statistically comparable clustering quality with ARFCM on the ten data sets. However, AMFCM  statistically outperforms ARFCM in the clustering efficiency, as shown in Fig. \ref{fig11c} and \ref{fig11d}. Finally, it can be found that none of the algorithms can present statistically comparable performance with AMFCM in both efficiency and quality. Therefore, AMFCM statistically achieves a good trade-off between the clustering of quality and efficiency.

\subsubsection{Efficiency of the New Affinity Filtering Scheme}\label{subsubsec5-3-3}
\begin{figure*}[htp!]
\centering
\includegraphics[width=0.96\textwidth]{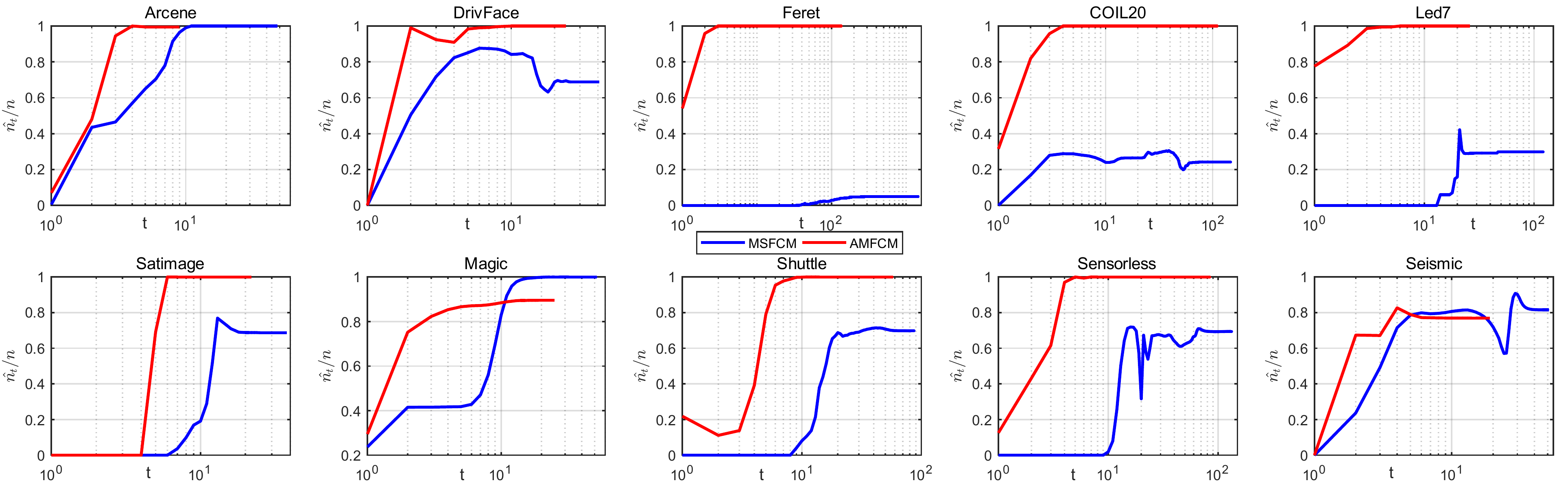}
\caption{Plots of $\frac{\hat{n}_{t}}{n}$ in relation to iteration on ten data sets. The log-scale of the x-axis clearly shows the differences of the sample filtering efficiency in [A] stage of the convergence process of the algorithms.}\label{fig8}
\end{figure*}

This set of experiments, carried out on the same data sets, is presented to test the filtering rate of the new affinity filtering \eqref{eq6}, which is the key factor that determines the acceleration of AMFCM.  The previous affinity filtering \eqref{eq_4} and the new affinity filtering \eqref{eq6} are chosen with the same settings for comparison. Let $\hat{n}_{t}=|\{1\leq j \leq n \mid |\mathcal{P}^{(t)}_{j}|\neq 0\}|$ denote the number of the samples that satisfy the affinity filtering in $t$ iteration. For ten data sets, the curves of $\frac{\hat{n}_{t}}{n}$ are plotted in relation to iteration $t$ in Fig. \ref{fig8}, where the blue and red lines represent MSFCM and AMFCM, respectively. And the log-scale of the x-axis clearly shows the differences of the sample filtering efficiency in stage [A] of the convergence process of the algorithms.


Firstly, the filtering efficiency of the new affinity filtering \eqref{eq6} is higher than that of the previous affinity filtering \eqref{eq_4} in stage [A]. Moreover, the new affinity filtering \eqref{eq6} has reached the highest filtering rate before 10 iterations, except for Magic and Seismic. Thus, the new affinity filtering \eqref{eq6} overcomes the inherent shortcomings of the previous affinity filtering \eqref{eq_4} that is easy to be invalid in stage [A], as shown in Fig. \ref{fig8}.

Secondly, for data sets Magic and Seismic, the efficiency of the previous affinity filtering \eqref{eq_4} is higher than that of AMFCM in stage [B] of the convergence process of the algorithms. However, the previous membership scaling \eqref{eq_new u} is not very effective in accelerating the algorithm in stage [B], which is because  $\beta_j^{(t)}$ is very close to 1 in stage [B]. The redundant contributions of the samples in the update of the centers have not been reduced in MSFCM, resulting in a decrease in its clustering efficiency.

AMFCM always maintains a high-efficiency level for all data sets, which can be observed from the number of iteration on the x-axis. According to the above analysis, it can be seen that the new affinity filtering \eqref{eq6} and membership scaling \eqref{eq_modified u} schemes are complementary to each other.  Therefore, AMFCM is very efficient in stages [A] and [B] of the convergence process of the algorithms.
\section{Conclusion}\label{sec7}
In this paper, FCM based on new affinity filtering and membership scaling (AMFCM) is proposed to accelerate the whole convergence stages of the traditional FCM clustering. In the proposed algorithm, a new affinity filtering is designed to obtain the complete non-affinity centers for each sample by a new set of triangle inequalities, which is more compatible with fuzzy clustering. Then, a new membership scaling is suggested to eliminate the contributions of the samples in the update of their non-affinity centers by setting the membership grades to 0 and promote the contributions of the samples in the update of the remaining centers by the fuzzy membership grades, which improves the performance and efficiency of the algorithm. Many experimental results have verified its effectiveness and efficiency on synthetic and real-world data sets. Therefore, AMFCM is a well-balanced FCM-type algorithm in clustering efficiency and quality. For future work, AMFCM could be explored to enhance the performance of the FCM-type clustering algorithms on high-dimensional data sets. Another interesting possibility is to generalize the concept to nonlinear fuzzy clustering according to information granules.

\bibliographystyle{IEEEtran}
\bibliography{Reference}

\clearpage

\begin{appendices}
\section{A geometric interpretation for the new affinity filtering scheme \eqref{eq6}}\label{sec7-1}
\begin{figure}[htp]
\centering
\includegraphics[width=0.33\textwidth]{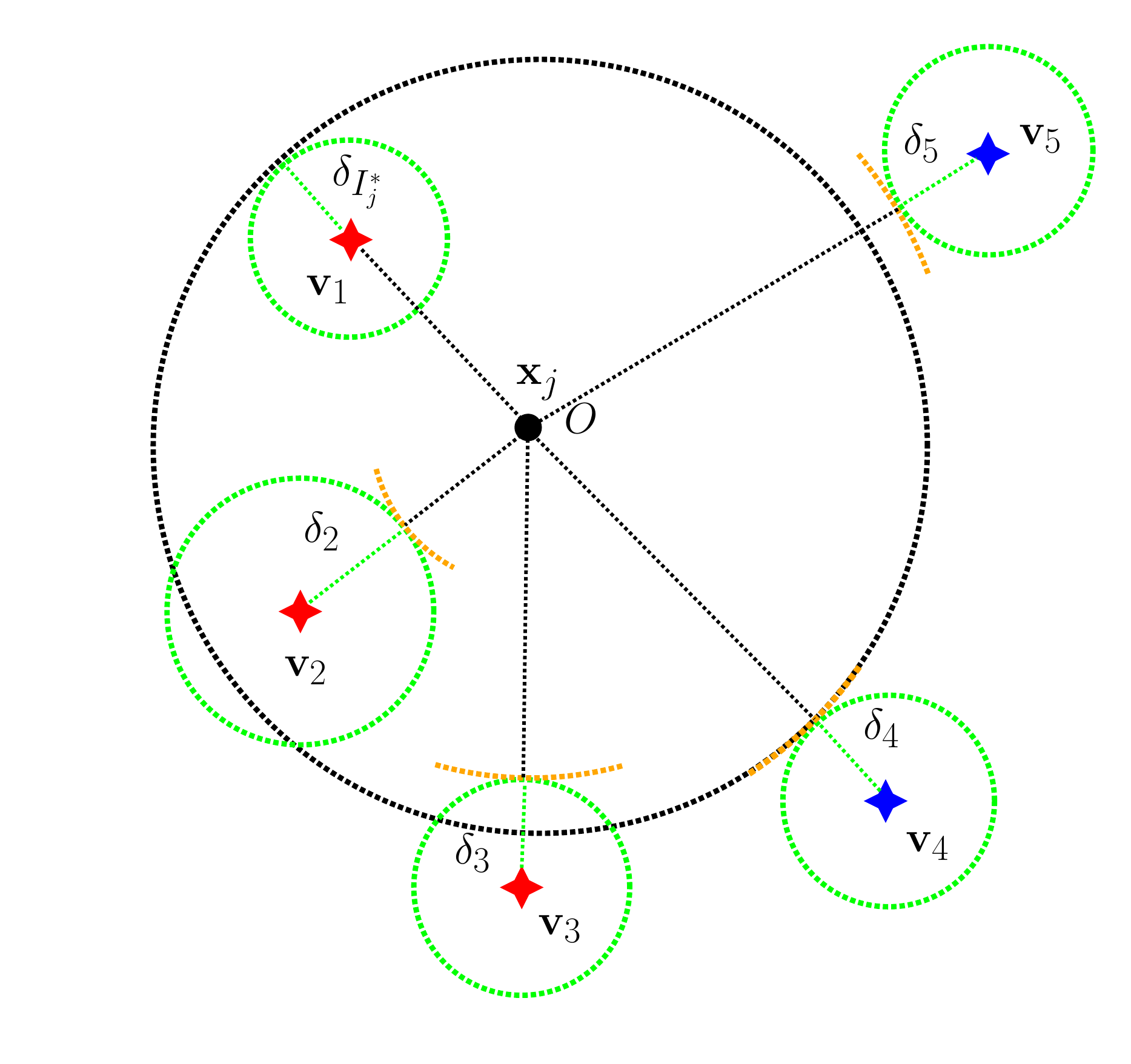}
\caption{The affinities between sample $\x_{j}$ and all centers are identified by the new affinity filtering scheme \eqref{eq6}. Let $\vv_{1}$ be the nearest center of $\x_{j}$.  The radius of the green circle and orange arc are the drift of $\vv_{i}$ and the lower bound of $\vv_{i}$ after one update, respectively.  The radius of the black circle is the upper bound of $\vv_{1}$ after one update. Here, the blue and red centers are the non-affinity and the remaining centers of $\x_{j}$, respectively.}\label{fig5}
\end{figure}

Here, a geometric interpretation is given for the new affinity filtering scheme \eqref{eq6}, as illustrated in Fig. \ref{fig5}. In Fig. \ref{fig5}, there is any sample $\x_{j}$ and the cluster centers are $\V=[\vv_{1}, \vv_{2}, \vv_{3}, \vv_{4}, \vv_{5}]$. Let $\vv_{1}$ be the nearest center of $\x_{j}$. The radius of green circle represent $\delta_{i}$, the radius of orange arc represent the lower bound of $\vv_{i}$, $d_{ij}-\delta_{i}$, and the radius of black circle is the upper bound of $\vv_{1}$, $d_{1,j}+\delta_{1}$. In Fig. \ref{fig5}, as long as the orange arc of $\vv_{i}$ is outside the black circle, it means that $\vv_{i}$ cannot be the nearest center of $\x_{j}$. Thus, $\vv_{i}$ is identified as the non-affinity center of $\x_{j}$ according to \textbf{Lemma} \ref{lem2}. For example, $\vv_{4}$ and $\vv_{5}$ are the non-affinity centers of $\x_{j}$, which are marked in blue.  The red centers are the remaining centers of $\x_{j}$. In this case, the set of the non-affinity centers of $\x_{j}$ obtained by \eqref{eq6} is complete.

\section{Proof of \textbf{Theorem} \ref{theorem3}}\label{sec7-4}
As analyzed in Subsection \ref{subsec3-2}, in the stage [B] of the convergence process, the assignment of most samples remains unchanged with the slight changes of the centers, where $\tilde{\mathbf{X}}^{(t)}$ is defined as the set of the samples whose the assignment does not change in $t$ and $t+1$ iteration. Therefore, we have: $\tilde{u}^{(t)}_{\cdot j}=\tilde{u}^{(t+1)}_{\cdot j}$ for $\x_{j} \in \tilde{\mathbf{X}}^{(t)}$ based on the Eq. \eqref{eq6} and \eqref{eq_modified u}.  However, in FCM, we have: $u_{\cdot j}^{(t)} \neq u_{\cdot j}^{(t+1)}$ for $\x_{j} \in \tilde{\mathbf{X}}^{(t)}$ based on the Eq. \eqref{eq_3}. According to the membership grade matrix $\U$ and $\tilde{\U}$ of FCM and AMFCM, we have:\\
\begin{equation}
\|\tilde{\U}^{(t+1)}-\tilde{\U}^{(t)}\| < \|\U^{(t+1)}-\U^{(t)}\|.
\end{equation}

Therefore, given the same termination parameter $\varepsilon$, the number of the iteration of AMFCM, $t_{\textbf{AMFCM}}$, is smaller than that of FCM, $t_{\textbf{FCM}}$.

\section{Proof of \textbf{Theorem} \ref{theorem1}}\label{sec7-2}
First, we list the objective function of FCM and AMFCM with the same initialization $\V^{(t)}$ from the $t$ iteration to the $t+1$ iteration:
\begin{equation}
\begin{aligned}
&J_{\textbf{FCM}}^{(t)}=\sum_{j=1}^{n} \sum_{i=1}^{c} ({u_{\text{FCM}}}^{(t)}_{ij})^{m} \|\x_{j}-{\vv}_{i}^{(t)}\|^2,\\
&J_{\textbf{FCM}}^{(t+1)}=\sum_{j=1}^{n} \sum_{i=1}^{c} ({u_{\text{FCM}}}^{(t)}_{ij})^{m} \|\x_{j}-{\vv_{\text{FCM}}}_{i}^{(t+1)}\|^2,\\
&J_{\textbf{AMFCM}}^{(t)}=\sum_{j=1}^{n} \sum_{i=1}^{c} (\tilde{u}^{(t)}_{ij})^m \|\x_{j}-{\vv}_{i}^{(t)}\|^2,\\
&J_{\textbf{AMFCM}}^{(t+1)}=\sum_{j=1}^{n} \sum_{i=1}^{c} (\tilde{u}^{(t)}_{ij})^m \|\x_{j}-{\vv_{\text{AMFCM}}}_{i}^{(t+1)}\|^2.\\
\end{aligned}
\label{eqchange}
\end{equation}

We assume that there must exist an iteration threshold  $T$, when $t<T$, $\mathcal{P}_{j}^{(t)}=\emptyset$ for $j=1,2,...,n$. Obviously, when $t<T$, $J_{\textbf{FCM}}=J_{\textbf{AMFCM}}$. When $t \geq T$, $\exists j \in \{1,2,...,n\}$, satisfying $\mathcal{P}_{j}^{(t)} \neq \emptyset$. Then, we derive lower and upper bounds of $J_{\textbf{AMFCM}}$.

For the lower bound of $J_{\textbf{AMFCM}}^{(t)}$, based on the Lagrangian multiplier method, we have:
$J_{\textbf{FCM}}^{(t)}<J_{\textbf{AMFCM}}^{(t)}$. For the upper bound of $J_{\textbf{AMFCM}}^{(t)}$, we have:

\begin{equation}
\begin{aligned}
J_{\textbf{AMFCM}}^{(t)}&=\sum_{j=1}^{n} \sum_{i=1}^{c} (\tilde{u}^{(t)}_{ij})^m \|\x_{j}-{\vv}_{i}^{(t)}\|^2,\\
&=\sum_{j=1}^{n} [\alpha^{(t)}_{j}]^{m} \cdot [\sum_{i \notin \mathcal{P}_{j}^{(t)} } ({u_{\text{FCM}}}^{(t)}_{ij})^m \|\x_{j}-{\vv}_{i}^{(t)}\|^2],\\
&< [\max_{1\le j\le n} \{\alpha^{(t)}_{j}\}]^{m} \cdot [\sum_{j=1}^{n} \sum_{i=1}^{c} ({u_{\text{FCM}}}^{(t)}_{ij})^{m} \|\x_{j}-{\vv}_{i}^{(t)}\|^2],\\
&=[\max_{1\le j\le n} \{\alpha^{(t)}_{j}\}]^{m} \cdot J_{\textbf{FCM}}^{(t)},\\
\end{aligned}
\label{eqchange1}
\end{equation}
where  $\tilde{\U}^{(t)}$ can be rewritten as $\alpha^{(t)}_{j} \cdot {u_{\text{FCM}}}^{(t)}_{ij}$ for $i\notin \mathcal{P}_{j}^{(t)}$, and $0$ for $i \in \mathcal{P}_{j}^{(t)}$. Based on the Eq. \eqref{eq6} and \eqref{eq_modified u}, $1<\alpha^{(t)}_{j}={\frac{1}{1-\sum_{i \in \mathcal{P}_{j}^{(t)}} {u_{\text{FCM}}}^{(t)}_{ij}}}<2.$  Therefore, we have:

\begin{equation}
J_{\textbf{FCM}}^{(t)}<J_{\textbf{AMFCM}}^{(t)}<[\max_{1\le j\le n} \{\alpha^{(t)}_{j}\}]^{m} \cdot J_{\textbf{FCM}}^{(t)}.
\label{eqchange2}
\end{equation}

For the upper bound of $J_{\textbf{AMFCM}}^{(t+1)}$, based on the Lagrangian multiplier method, we have:
\begin{equation}
\begin{aligned}
J_{\textbf{AMFCM}}^{(t+1)}=\sum_{j=1}^{n} \sum_{i=1}^{c} (\tilde{u}^{(t)}_{ij})^m \|\x_{j}-{\vv_{\text{AMFCM}}}_{i}^{(t+1)}\|^2,\\
< \sum_{j=1}^{n} \sum_{i=1}^{c} (\tilde{u}^{(t)}_{ij})^m \|\x_{j}-{\vv_{\text{FCM}}}_{i}^{(t+1)}\|^2.\\
\end{aligned}
\label{eqchange3}
\end{equation}

Based on a similar derivation from the Eq. \eqref{eqchange1}, we have:
\begin{equation}
J_{\textbf{AMFCM}}^{(t+1)}<[\max_{1\le j\le n} \{\alpha^{(t)}_{j}\}]^{m} \cdot J_{\textbf{FCM}}^{(t+1)}.
\label{eqchange4}
\end{equation} 

Finally, based on the Eq. \eqref{eqchange2} and  \eqref{eqchange4}, we have:
\begin{equation}
\frac{J_{\textbf{AMFCM}}^{(t)}}{J_{\textbf{AMFCM}}^{(t+1)}} > \frac{J_{\textbf{FCM}}^{(t)}}{[\max_{1\le j\le n} \{\alpha^{(t)}_{j}\}]^{m} \cdot J_{\textbf{FCM}}^{(t+1)}}.
\label{eqchange5}
\end{equation} 

Here,  we set the  decline ratio of the objective function of FCM as $\lambda^{(t)}=\frac{J_{\textbf{FCM}}^{(t)}}{J_{\textbf{FCM}}^{(t+1)}}$. As analyzed in \cite{du1999centroidal}, the convergence rate of the alternating optimization algorithm (AO) drops with the iteration.  Therefore, $\lambda^{(t)}$ is monotonically decreasing with the iteration, i.e., $\lambda^{(t)} \geq 1$ and $\lim\limits_{t\to +\infty} \lambda^{(t)}=1$. Obviously, in the mid stage, when $\lambda^{(t)}>[\max_{1\le j\le n} \{\alpha^{(t)}_{j}\}]^{m}$, we have:
\begin{equation}
\frac{J_{\textbf{AMFCM}}^{(t)}}{J_{\textbf{AMFCM}}^{(t+1)}} >1.
\label{eqchange7}
\end{equation} 

In conclusion, AMFCM does not converge precociously in the mid stage of the clustering process.

\section{Proof of \textbf{Theorem} \ref{theorem2}}\label{sec7-3}
First,  let ${\hat{\mathbf{X}}}^{(t)}$ be the set of samples $\x_{j}$ with $|\mathcal{P}_{j}^{(t)}|=1$ in $t$ iteration, where $j=1,2,...,|\hat{\mathbf{X}}^{(t)}|$.
Given the same initialization $\V^{(t)}$ on $\hat{\mathbf{X}}^{(t)}$, obviously, ${J_{\textbf{Hard}}^{\textbf{AMFCM}}}^{(t+1)}={J_{\textbf{Hard}}^{\textbf{FCM}}}^{(t+1)}$. Next, after one iteration of centers $\V^{(t)}$ through FCM and AMFCM respectively, based on the Lagrangian multiplier method,  we have:\\
\begin{equation}
\begin{aligned}
J_{\textbf{AMFCM}}^{(t+1)}&=\sum_{j=1}^{|\hat{\mathbf{X}}^{(t)}|}\sum_{i=1}^{c} (\tilde{u}^{(t)}_{ij})^m \|x_{j}-{\vv_{\text{AMFCM}}}_{i}^{(t+1)}\|^2 \\
&< \sum_{j=1}^{|\hat{\mathbf{X}}^{(t)}|}\sum_{i=1}^{c} (\tilde{u}^{(t)}_{ij})^m \|x_{j}-{\vv_{\text{FCM}}}_{i}^{(t+1)}\|^2.
\end{aligned}
\end{equation}

Because $|\mathcal{P}^{(t)}_{j}|=1$ for any $\x_{j} \in \hat{\mathbf{X}}^{(t)}$, we have that $\tilde{u}^{(t)}_{I_{j}^{*}j}=1$, and $\tilde{u}^{(t)}_{ij}=0$ for $i \neq I_{j}^{*}$. Therefore, we have: 

\begin{equation}
\begin{aligned}
{J_{\textbf{Hard}}^{\textbf{AMFCM}}}^{(t+1)}=J_{\textbf{AMFCM}}^{(t+1)} <{J_{\textbf{Hard}}^{\textbf{FCM}}}^{(t+1)}.
\end{aligned}
\end{equation}

In conclusion, for the samples $\x_{j}$ with $|\mathcal{P}_{j}^{(t)}|=1$, the corresponding hard objective value of AMFCM is smaller than that of FCM in $t$ iteration.

\end{appendices}

\end{document}